\documentclass{article}
\usepackage[preprint]{neurips_2026}

\usepackage[utf8]{inputenc} 
\usepackage[T1]{fontenc}    
\usepackage{url}            
\usepackage{booktabs}       
\usepackage{amsfonts}       
\usepackage{nicefrac}       
\usepackage{microtype}      

\usepackage{algorithm}
\usepackage{algorithmicx}
\usepackage[noend]{algpseudocode}
\usepackage{amsmath}
\usepackage{amssymb}
\usepackage{amsthm}
\usepackage{bbm}
\usepackage{bm}
\usepackage{caption}
\usepackage{color}
\usepackage{colortbl}
\usepackage{dirtytalk}
\usepackage{dsfont}
\usepackage{enumitem}
\usepackage{graphicx}
\usepackage{listings}
\usepackage{mathtools}
\usepackage{soul}
\usepackage{subcaption}
\usepackage{tcolorbox}
\usepackage{times}
\usepackage{wrapfig}
\usepackage{xspace}

\usepackage[dvipsnames]{xcolor}
\usepackage[bookmarks=false]{hyperref}
\hypersetup{
  linktocpage=true,
  colorlinks=true,
  citecolor=Green,
  linkcolor=Green,
  urlcolor=Green
}
\usepackage[capitalize,noabbrev]{cleveref}

\newcommand{\rev}[1]{\textcolor{black}{#1}} 

\usepackage{thmtools}
\declaretheorem[name=Theorem,refname={Theorem,Theorems},Refname={Theorem,Theorems}]{theorem}

\declaretheorem[name=Proposition,refname={Proposition,Propositions},Refname={Proposition,Propositions},sibling=theorem]{proposition}

\newcommand{\cC}{\mathcal{C}}

\newcommand{\cL}{\mathcal{L}}

\newcommand{\cN}{\mathcal{N}}

\newcommand{\normw}[2]{\|#1\|_{#2}}

\mathchardef\mhyphen="2D

\newcommand{\norm}[1]{\left\lVert #1 \right\rVert_2}
\newcommand{\E}{\mathbb{E}}
\newcommand{\Var}{\mathrm{Var}}

\newcommand{\pold}{p_{\mathrm{old}}}

\newcommand{\qeta}{q_\eta}

\newcommand{\vtheta}{v_\theta}
\newcommand{\vold}{v_{\mathrm{old}}}

\newcommand{\diffusionnft}{\ensuremath{\color{Green}\tt DiffusionNFT}\xspace}
\newcommand{\flowgrpo}{\ensuremath{\color{Green}\tt Flow\mhyphen GRPO}\xspace}
\newcommand{\advantageflow}{\ensuremath{\color{Green}\tt AdvantageFlow}\xspace}
\newcommand{\aflow}[1]{\ensuremath{\color{Green}\tt AdvantageFlow(#1)}\xspace}

\newcommand{\old}{\mathrm{old}}

\newcommand{\ftheta}{f_\theta}
\newcommand{\fold}{f_{\mathrm{old}}}
\newcommand{\fref}{f_{\mathrm{ref}}}
\newcommand{\lossAFM}{\mathcal{L}_{\mathrm{AFM}}}

\newcommand{\ellNFT}{\ell_{\mathrm{NFT}}}
\newcommand{\gammaNFT}{\gamma_{\mathrm{NFT}}}

\newcommand{\dif}{\mathop{}\!\mathrm{d}}
\newcommand{\inner}[2]{\left\langle #1, #2 \right\rangle}
\DeclareMathOperator{\clip}{clip}

\title{AdvantageFlow: Advantage-Weighted Least Squares for RL in Flow Models}

\author{
  Branislav Kveton,
  Anup Rao,
  Subhojyoti Mukherjee,
  Krishna Kumar Singh,
  Viet Dac Lai \\
  Adobe Research \\
  \texttt{kveton@adobe.com}
}

\begin{document}

\maketitle

\begin{abstract}
We introduce AdvantageFlow, a forward-process reinforcement learning algorithm for rectified flow models. Unlike Flow-GRPO, which optimizes the reverse process, we optimize an advantage-weighted forward-process prediction loss. This optimization problem is unstable when advantages are negative and the loss becomes non-convex. We stabilize it by rollout policy regularization, which reduces variance and arises from fitting a local reward-improving target distribution. We evaluate AdvantageFlow on image generation tasks with Stable Diffusion 3.5 Medium. It outperforms both Flow-GRPO and a state-of-the-art forward-process RL baseline based on negative-aware fine-tuning.
\end{abstract}

\begin{figure}[h!]
  \centering
  {\tiny
  \begin{tabular}{@{\ }rc@{\ }c@{\ }c@{\ }c@{\ }c@{\ }c@{\ }}
    & toothbrush &
    person and &
    four apples &
    dog right of &
    purple backpack &
    yellow sports ball and \\
    & &
    stop sign &
    &
    teddy bear &
    &
    green boat \\
    \rotatebox{90}{\hspace{0.1in}Base model + cfg} &
    \includegraphics[width=0.8in]{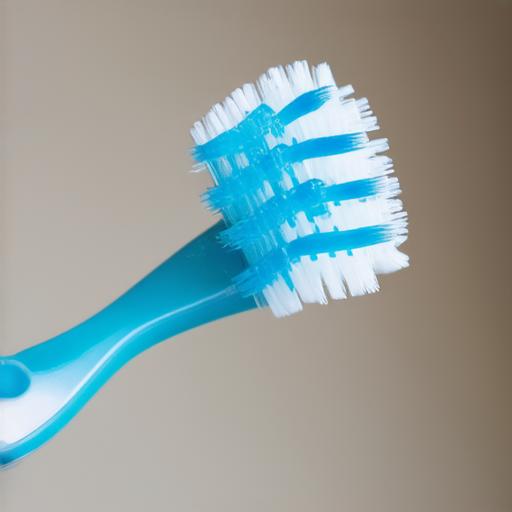} &
    \includegraphics[width=0.8in]{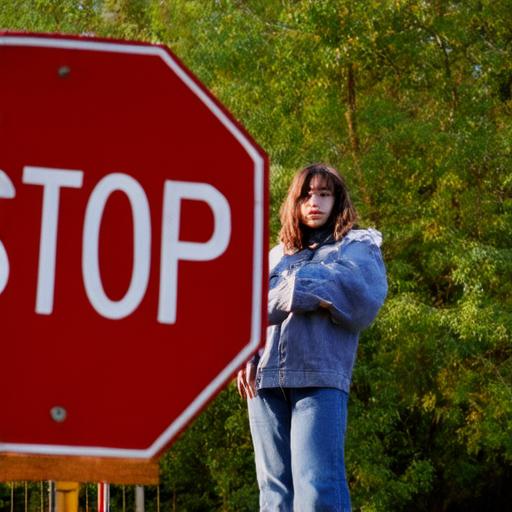} &
    \includegraphics[width=0.8in]{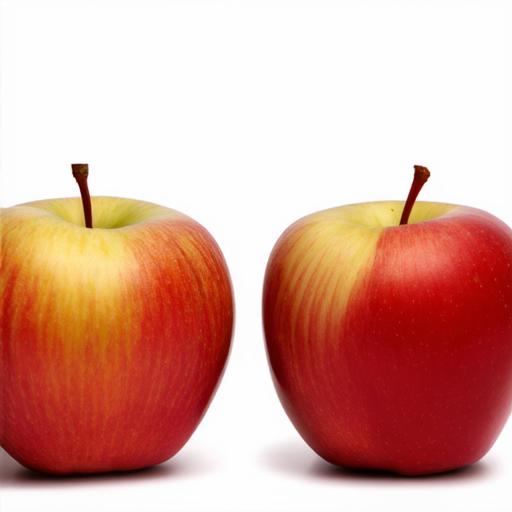} &
    \includegraphics[width=0.8in]{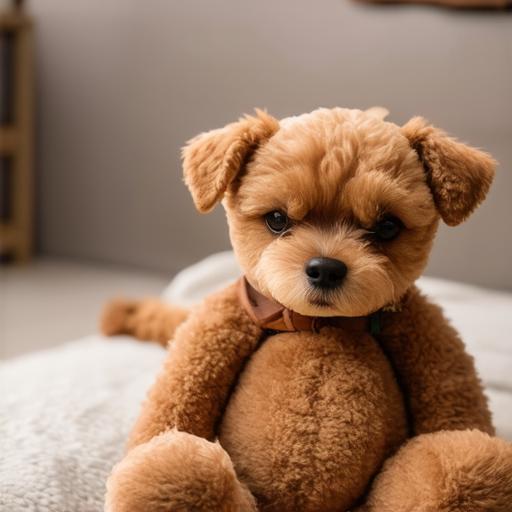} &
    \includegraphics[width=0.8in]{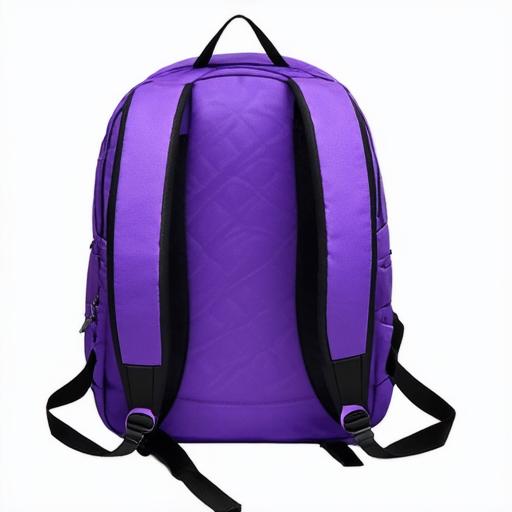} &
    \includegraphics[width=0.8in]{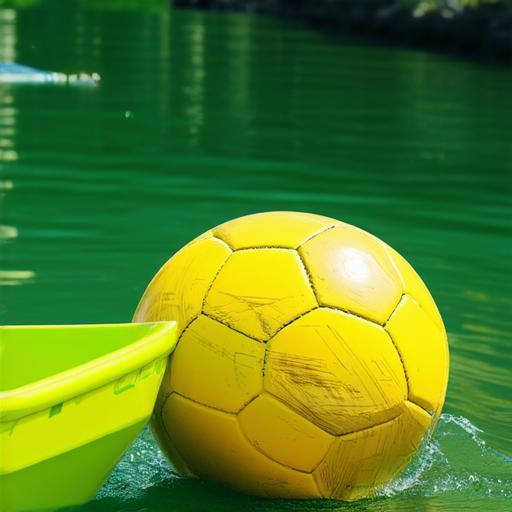} \\
    \rotatebox{90}{\hspace{0.15in}DiffusionNFT} &
    \includegraphics[width=0.8in]{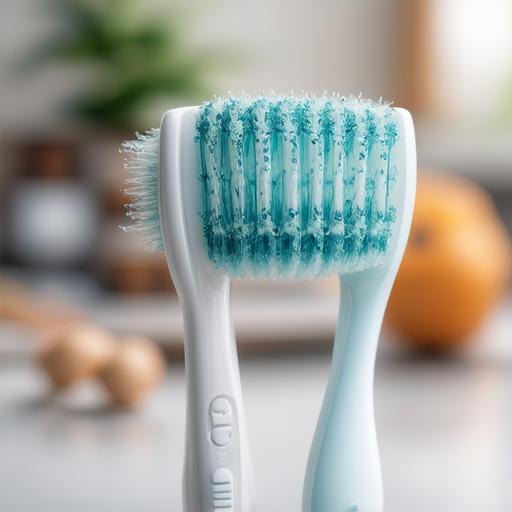} &
    \includegraphics[width=0.8in]{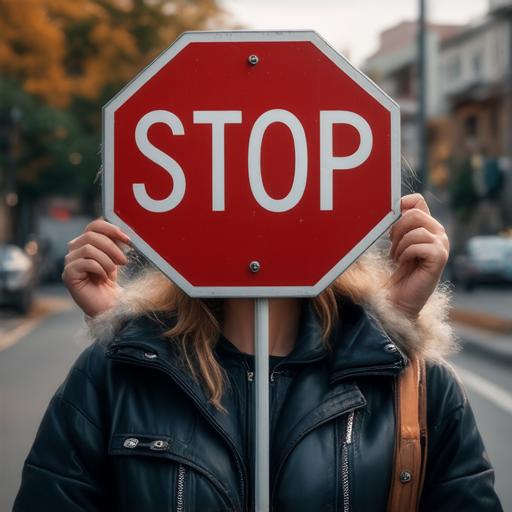} &
    \includegraphics[width=0.8in]{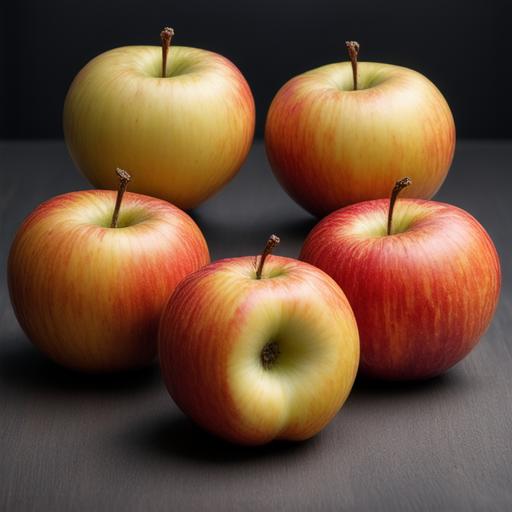} &
    \includegraphics[width=0.8in]{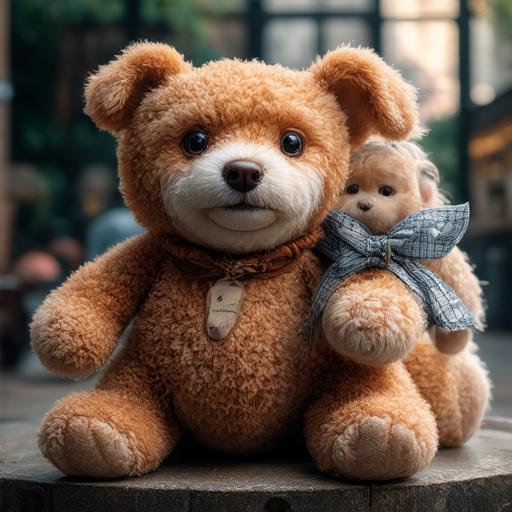} &
    \includegraphics[width=0.8in]{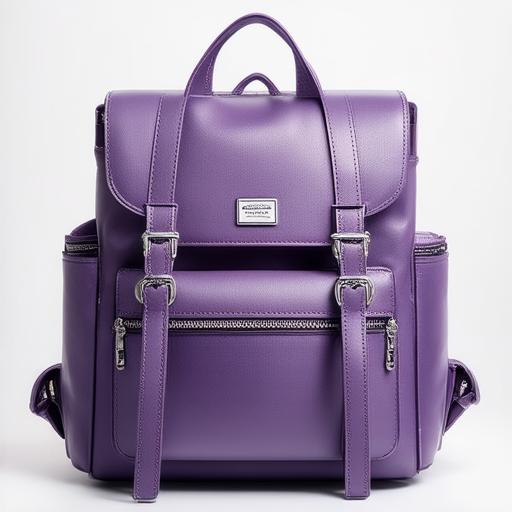} &
    \includegraphics[width=0.8in]{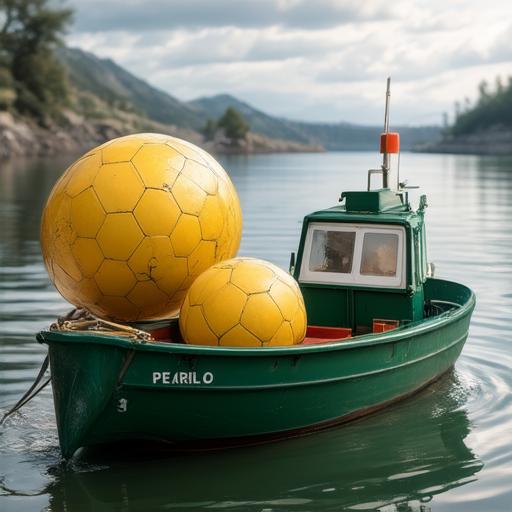} \\
    \rotatebox{90}{\color{red}AdvantageFlow (ours)} &
    \includegraphics[width=0.8in]{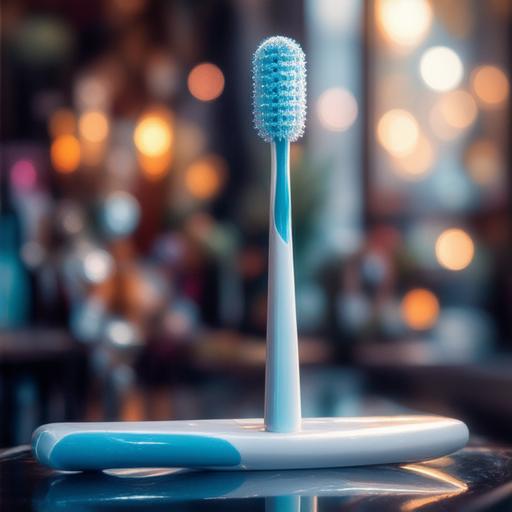} &
    \includegraphics[width=0.8in]{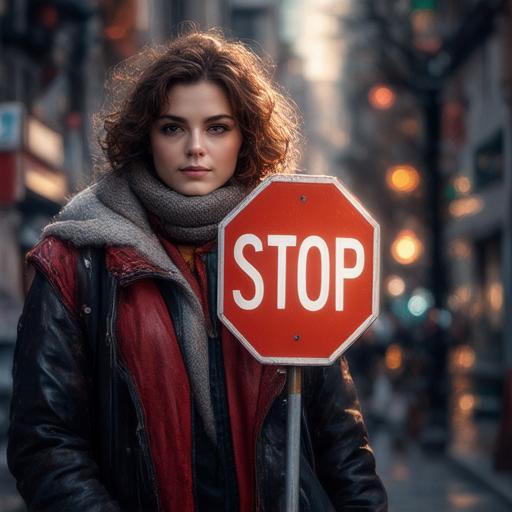} &
    \includegraphics[width=0.8in]{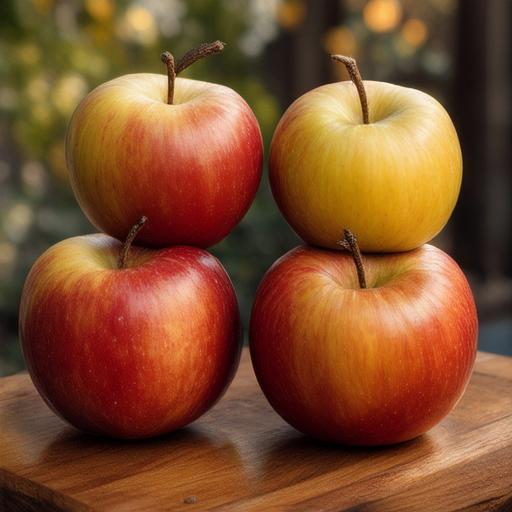} &
    \includegraphics[width=0.8in]{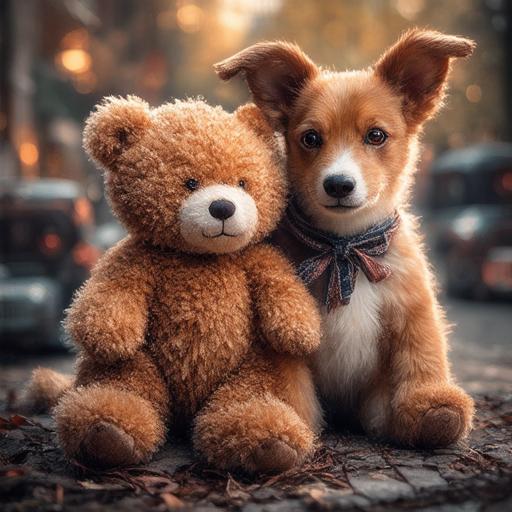} &
    \includegraphics[width=0.8in]{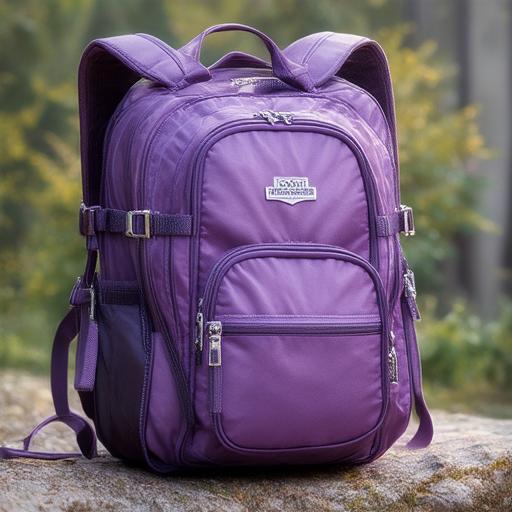} &
    \includegraphics[width=0.8in]{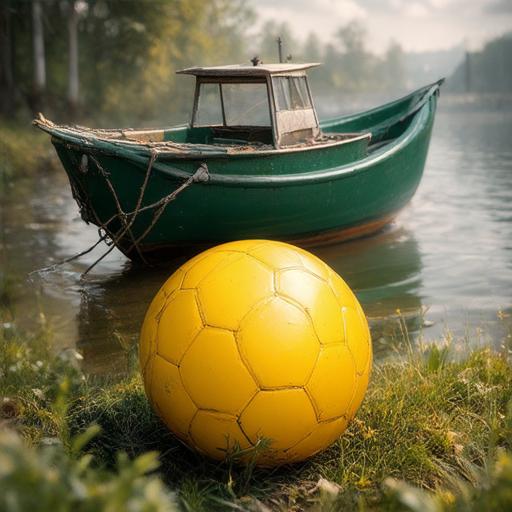}
  \end{tabular}} \vspace{0.1in}
  \caption{Images generated by \aflow{1.1} compared to \diffusionnft and base model (Stable Diffusion 3.5 Medium) with cfg. \advantageflow improves in generating objects, counting, color understanding, and position understanding; all without classifier free guidance (cfg). These results are obtained after $125$ H100 GPU training hours.}
  \label{fig:teaser}
\end{figure}

\section{Introduction}
\label{sec:introduction}

Flow and diffusion models are standard models for generating high-quality images \citep{lipman2023flow,liu2023rectified,albergo2023stochastic}. The models are usually pre-trained by minimizing the flow-matching loss and then post-trained to improve human preference alignment, prompt following, text rendering, and image composition. Given a pre-trained image generation model and a reward model, we want to adapt the pre-trained model to generate more rewarding images, while preserving its quality and diversity. At inference time, images are generated by sampling noise and following a learned denoising trajectory.

Most \emph{reinforcement learning (RL)} algorithms for flow models optimize this sampling process by treating image generation as a sequential process along the reverse denoising trajectory and apply a variant of policy gradients \citep{black2024ddpo,fan2023dpok,flowgrpo,clark2024draft,prabhudesai2023alignprop,domingoenrich2024adjoint}. These methods are powerful but complex. They need to introduce additional stochasticity just to compute policy gradients. Moreover, since only the final generated image is rewarded, reward attribution to intermediate denoising steps where mistakes happen must be learned from data. Several recent papers try to improve reward attribution in RL for flow models \citep{savani2026stepwiseflowgrpo,he2025tempflowgrpo,zhou2025fine}. Forward-process RL offers a simpler alternative. Instead of optimizing along the reverse sampling process, we sample complete images, score them with rewards, and train the model using the same forward-process loss as in pre-training but with reward-dependent weights. The only change is in the weighting.

We introduce \advantageflow, a forward-process RL algorithm for rectified flow models. The key idea is to sample a batch of images per prompt and compute their advantages: how much better or worse each image is relative to the batch average. Then we train the model to generate images with higher advantages more frequently, while regularizing it with respect to its moving average to keep the updates stable. The optimized loss function is \emph{advantage-weighted least squares}. The advantage weighting corresponds to fitting a local linearly tilted target distribution that shifts probability mass towards more rewarding samples. The regularization arises naturally as a variance reduction step: it replaces noisy per-sample targets with their conditional means in the reward-independent part of the loss. It also ensures that the loss is strictly convex per sample. We evaluate \advantageflow on image generation tasks with Stable Diffusion 3.5 Medium. \advantageflow improves the quality of generated images for several reward models and can also be applied to multi-objective optimization. It outperforms both \flowgrpo \citep{flowgrpo} and a state-of-the-art forward-process RL baseline based on negative-aware fine-tuning \citep{diffusionnft}. We complement our evaluation by GenEval \citep{ghosh23geneval} to mitigate a potential concern of reward hacking.


Our contributions are: (1) We introduce \advantageflow, a least-squares algorithm for reinforcement learning in flow models using the forward process. (2) We derive the algorithm from a local reward-improving target distribution, where the rollout regularization arises as a variance reduction technique for noisy sampled targets. (3) We show that a state-of-the-art algorithm for forward-process RL is an instance of our method with a particular adaptive rollout regularization. (4) We empirically evaluate \advantageflow on image generation tasks with Stable Diffusion 3.5 Medium. We observe that it generates high-quality images and outperforms state-of-the-art baselines.

\section{Background}
\label{sec:background}

We briefly review flow matching for image generation \citep{lipman2023flow,liu2023rectified,albergo2023stochastic}. \emph{Rectified flow matching models} learn continuous-time normalizing flows by training on a linear interpolation of clean images and Gaussian noise. Let $c$ be a prompt and $q(\cdot \mid c)$ be a distribution over clean images. Let $x_0 \sim q(\cdot \mid c)$ be a random clean image, $\epsilon \sim \cN(0, I)$ be Gaussian noise, and $t \sim [0, 1]$ be a random time. Then the flow matching objective is minimizing
\begin{align}
  \E_{x_0 \sim q(\cdot \mid c), \, \epsilon, \, t}
  [\norm{v_\theta(x_t, t, c) - (\epsilon - x_0)}^2]\,,
  \label{eq:flow-matching loss}
\end{align}
where $x_t = (1 - t) x_0 + t \epsilon$ is a random interpolation of $x_0$ and $\epsilon$, $v_\theta(x_t, t, c)$ is the \emph{predicted velocity} at $x_t$, $\theta$ is the \emph{learned model parameter}, and $\epsilon - x_0$ is the ground-truth velocity. The objective \eqref{eq:flow-matching loss} can also be viewed as prediction loss minimization. Specifically, the \emph{predicted clean image} at $x_t$ is $\ftheta(x_t, t, c) = x_t - t \, v_\theta(x_t, t, c)$ and hence $\norm{\ftheta(x_t, t, c) - x_0}^2 = t^2 \norm{v_\theta(x_t, t, c) - (\epsilon - x_0)}^2$.

Because of this equivalence, the theory in this work could be done for both the velocity loss in \eqref{eq:flow-matching loss} or prediction loss, which differs only by $t^2$. We focus on the latter due to better empirical performance. Our goal is to develop and justify algorithms for minimizing
\begin{align}
  \E_{x_0 \sim q(\cdot \mid c), \, \epsilon, \, t}
  [\normw{\ftheta(x_t, t, c) - x_0}{2}^2]
  \label{eq:prediction loss}
\end{align}
weighted by rewards. Without the rewards, the minimizer is the conditional mean of $x_0$.

Sampling in flow models is implemented using a \emph{deterministic ODE}. Specifically, it starts with noise $x_T$ and follows a denoising \emph{trajectory} $x_T, x_{T -\Delta t}, \dots, x_0$, where
\begin{align}
  x_{t - \Delta t}
  = x_t - v_\theta(x_t, t, c) \, \Delta t\,.
  \label{eq:ode}
\end{align}
The key idea in most existing RL algorithms for flow models, such as \flowgrpo, is treating this trajectory as a sequential process with \emph{reward} $r(x_0, c)$, for final generated image $x_0$ given prompt $c$. There are two challenges. The first challenge is that policy gradients \citep{williams92simple} cannot be applied to deterministic trajectories, such as the one generated by \eqref{eq:ode}. To address this, \flowgrpo replaces \eqref{eq:ode} with an SDE
\begin{align}
  x_{t - \Delta t}
  = x_t - \left[v_\theta(x_t, t, c) - \frac{\sigma_t^2}{2 t} \hat{x}_1\right]
  \Delta t + \sigma_t \sqrt{\Delta t} \xi\,, \quad
  \hat{x}_1
  = x_t + (1 - t) \, v_\theta(x_t, t, c)\,,
  \label{eq:sde}
\end{align} 
where $\xi\sim\mathcal N(0,I)$ and $\sigma_t$ is the noise schedule of the
stochastic sampler. This defines a Gaussian transition $\pi_\theta(x_{t - \Delta t} \mid x_t, c)$ at denoising step $t$ with covariance $\sigma_t^2 \Delta t \, I$, allowing policy gradients to be applied. The second challenge is that the reward $r(x_0, c)$ is assigned only to the final generated image, while $\pi_\theta(x_{t - \Delta t} \mid x_t, c)$ needs to be optimized. This creates a reward attribution problem and many recent works try to improve it \citep{savani2026stepwiseflowgrpo,he2025tempflowgrpo,zhou2025fine}.

Forward-process RL has neither of these problems. It uses the ODE sampler in \eqref{eq:ode} to generate clean images and thus is better aligned with the inference-time denoising. The clean image is predicted directly using $\ftheta(x_t, t, c)$ and therefore no reward attribution is needed.

\section{Algorithm}
\label{sec:algorithm}

We introduce \advantageflow, a forward-process RL algorithm for rectified flow models. The key idea is to minimize the prediction loss in \eqref{eq:prediction loss} weighted by advantages. The optimization of the loss is stabilized by regularization by a \emph{rollout policy}, which generates images and is slowly updated. To prevent the loss of capabilities that we do not optimize for, we additionally regularize by a \emph{reference policy}. We denote the predictions of clean images under the learned, rollout, and reference policies by $\ftheta$, $\fold$, and $\fref$, respectively.

\subsection{Intuition}
\label{sec:intuition}

Suppose that we sample multiple images per prompt and score them with rewards. Some images are better than the average and some are worse. We want to generate the good images more often than the bad ones. A natural idea is to minimize the prediction loss in \eqref{eq:prediction loss} weighted by advantages, rewards minus the mean reward, which upweight above-average images and downweight the rest. Specifically, the loss in \eqref{eq:prediction loss} becomes $A(x_0, c) \norm{\ftheta(x_t, t, c) - x_0}^2$, where $A(x_0, c)$ is the \emph{advantage} of clean image $x_0$ given prompt $c$.

This alone is not sufficient. When an image has a negative advantage, $A(x_0, c) \norm{\ftheta(x_t, t, c) - x_0}^2$ becomes negative. This can lead to ill-conditioned optimization and divergence because the loss is not convex in $\ftheta$ anymore. To fix it, we regularize $\ftheta(x_t, t, c)$ by a prediction under the rollout policy, $\fold(x_t, t, c)$. This stabilizes the optimization and keeps the update local with respect to the rollout policy. In addition, we regularize $\ftheta(x_t, t, c)$ by a prediction under the reference policy, $\fref(x_t, t, c)$. This prevents the learned policy from losing capabilities not captured by rewards. Putting all of these together, our objective is
\begin{align}
  \E_{x_0 \sim \pold(\cdot \mid c), \, \epsilon, \, t}
  [\ell_{\theta, \, \theta_\old}(A(x_0, c), x_0, x_t, t, c)]\,,
  \label{eq:advantageflow loss}
\end{align}
where $\ell_{\theta, \, \theta_\old}(A(x_0, c), x_0, x_t, t, c) =$
\begin{align*}
  A(x_0, c) \norm{\ftheta(x_t, t, c) - x_0}^2 + 
  \gamma \norm{\ftheta(x_t, t, c) - \fold(x_t, t, c)}^2 +
  \lambda \norm{\ftheta(x_t, t, c) - \fref(x_t, t, c)}^2\,.
  \nonumber
\end{align*}
We refer to the three terms as the advantage-weighted prediction loss, rollout regularization, and reference regularization. The weights $\gamma, \lambda \geq 0$ control the amount of rollout and reference regularization, respectively.

The loss inside \eqref{eq:advantageflow loss} has three terms with the following functions. The \emph{advantage-weighted prediction loss} fits the policy to the generated images weighted by their advantages. Positive $A(x_0, c)$ pull the policy towards good images and negative $A(x_0, c)$ push it away from the bad images. The \emph{rollout regularization} penalizes deviations from the rollout policy. It ensures that the loss is strictly convex per sample, by making the total quadratic coefficient $A(x_0, c) + \gamma + \lambda$ positive (\cref{sec:well-conditioned}). The \emph{reference regularization} penalizes deviations from the initial pre-trained model. It prevents forgetting behaviors not captured by the reward.

\subsection{AdvantageFlow}
\label{sec:advantageflow}

Our algorithm is iterative. At each iteration, we sample $L$ prompts, generate $K$ images per prompt using the rollout policy, score the images by rewards, compute their advantages, and then update the learned and rollout policies. We call the algorithm \advantageflow because it optimizes the flow model using advantages. The pseudo-code is in \cref{alg:advantageflow}.

\textbf{Sampling.} The images are sampled from the rollout policy. The policy is parameterized by $\theta_\old$, which has the same dimensionality as the learned policy parameter $\theta$. The parameter $\theta_\old$ is slowly updated using an \emph{exponential moving average (EMA)} of $\theta$, motivated by \citet{diffusionnft}. Both $\theta$ and $\theta_\old$ parametrize velocity models $\vtheta$ and $\vold$. Therefore, they can be used to sample images, through \eqref{eq:ode}, or predict clean images, through $\ftheta$ and $\fold$.

\textbf{Advantages.} The advantages can be computed in many ways \citep{shao2024deepseekmath,liu2025understanding,flowgrpo}. Inspired by \citet{diffusionnft}, we standardize them per-batch as follows. Let $r^{i, k}$ be the reward for image $k \in [K]$ and prompt $c_i$, where $i \in [L]$. Then the corresponding advantage is
\begin{align}
  \!\!\!\!\!
  A^{i, k}
  = \clip\left(\frac{r^{i, k} - \hat{r}^i}{Z}, \, -1, \, 1\right)\,, \quad
  \hat{r}^i
  = \frac{1}{K} \sum_{k = 1}^K r^{i, k}\,, \quad
  Z
  = \sqrt{\frac{1}{K L} \sum_{i = 1}^L \sum_{k = 1}^K (r^{i, k} - \hat{r}^i)^2}\,,
  \label{eq:empirical advantage}
\end{align}
where $\hat{r}^i$ is the per prompt mean and $Z$ is the batch standard deviation. The advantage of a sample is positive if its reward is above the average and negative if it is below.

\textbf{Prediction loss minimization.} We minimize an empirical version of \eqref{eq:advantageflow loss}, where the expectation over prompts, clean images, time, and Gaussian noise is approximated by an empirical average over $K L$ samples in a batch. The time $t$ and noise $\epsilon$ are drawn fresh per sample.

\textbf{Computational cost.} Each iteration of \advantageflow{} scales linearly with the number of samples per batch $K L$. This is due to sampling $K L$ images using $\vold$ and minimizing an empirical prediction loss over them, requiring predictions by $\ftheta$, $\fold$, and $\fref$ for each noisy image $x^{i, k}_t$.

\textbf{Space complexity.} The space complexity is mostly driven by the dimensionality of $\theta$ and $\theta_\old$. In our work, both are encoded using LoRA \citep{hu22lora} on the top of the initial pre-trained model, which represents the reference policy. We also store $K L$ generated images and their advantages in each iteration.

\begin{algorithm}[t]
  \caption{\advantageflow}
  \label{alg:advantageflow}
  \begin{algorithmic}[1]
    \State \textbf{Input:} Pre-trained model $\theta_{\mathrm{ref}}$, reward model $r$, prompt dataset $\cC$, batch size $L$, group size $K$, regularization strengths $\gamma, \lambda \geq 0$, and EMA update rate $\rho \geq 0$
    \State Initialize $\theta \gets \theta_{\mathrm{ref}}$ and $\theta_\old \gets \theta_{\mathrm{ref}}$
    \For{each iteration}
      \State Sample $L$ prompts $c_1, \dots, c_L \sim \cC$
      \State For each $c_i$, generate $K$ images $x_0^{i, 1}, \dots, x_0^{i, K}$ using \eqref{eq:ode} with rollout velocity $\vold$
      \ForAll{$(i, k) \in [L] \times [K]$}
        \State Compute per-image advantage $A^{i, k}$ using \eqref{eq:empirical advantage}
        \State $x_t^{i, k} \gets (1 - t^{i, k}) \, x_0^{i, k} + t^{i, k} \epsilon^{i, k}$, for freshly sampled $t^{i, k}$ and $\epsilon^{i, k}$
      \EndFor
      \State Update $\theta$ by gradient descent on
      $\frac{1}{K L} \sum_{i = 1}^L \sum_{k = 1}^K
      \ell_{\theta, \, \theta_\old}(A^{i, k}, x_0^{i, k}, x_t^{i, k}, t^{i, k}, c_i)$
      \State Update rollout policy as
      $\theta_\old \gets \rho \, \theta_\old + (1 - \rho) \, \theta$
    \EndFor
  \State \textbf{Output:} $\theta$
  \end{algorithmic}
\end{algorithm}

\subsection{Strict Convexity and Algorithmic Variants}
\label{sec:well-conditioned}

The loss inside \eqref{eq:advantageflow loss} is quadratic in $\ftheta$ for any sample, with a \emph{quadratic coefficient} $A(x_0, c) + \gamma + \lambda$. It is strictly convex, and therefore has a unique minimum in the prediction space when
\begin{align}
  A(x_0, c) + \gamma + \lambda
  > 0\,.
  \label{eq:well-conditioned}
\end{align}
Without rollout regularization, $\gamma = 0$ and negative advantages $A(x_0, c)$ could make the quadratic coefficient negative, thus turning the loss concave and making optimization unstable.

Since the advantages in \eqref{eq:empirical advantage} are clipped to $[-1, 1]$, it is easy to ensure that \eqref{eq:well-conditioned} holds for each sample: any $\gamma > 1$ suffices, irrespective of $\lambda \geq 0$. We experiment with two choices. A non-adaptive $\gamma = 1.1$ keeps the total quadratic coefficient close to $\gamma + \lambda$ across the batch, since the unclipped advantage is centered and clipping keeps the coefficient bounded. We call this algorithm \aflow{1.1}. On the other hand, the advantage-dependent choice $\gamma(A) = 1 - A(x_0, c)$ guarantees that the total quadratic coefficient is $A(x_0,c) + \gamma(A) + \lambda = 1 + \lambda$ per sample and thus constant. We call this algorithm \aflow{1{-}A} and relate it to \diffusionnft in \cref{sec:diffusionnft}.

\section{Analysis}
\label{sec:analysis}

We now show that the \advantageflow objective arises from fitting a local reward-improving target distribution, with the rollout policy entering as a variance reduction step. We also show that \diffusionnft can be viewed as a special case of \advantageflow. Our analysis below is local and under
exact expectations.

\subsection{Linearly Tilted Target Distribution}
\label{sec:additive target}

Fix a prompt $c$ and let $\pold(\cdot \mid c)$ be the rollout distribution. We define the centered advantage
\begin{equation}
  A(x_0, c)
  = r(x_0, c) - \E_{\rev{x'_0 \sim} \pold(\cdot \mid c)}[r(\rev{x'_0}, c)]\,.
  \label{eq:conditional advantage}
\end{equation} 
\rev{Consider the distribution obtained by tilting $\pold$ by a factor linear in the advantage,}
\begin{equation}
  \qeta(x_0 \mid c)
  = (1 + \eta A(x_0, c)) \, \pold(x_0 \mid c)\,.
  \label{eq:additive target}
\end{equation}
It increases the probability of above-average samples ($A(x_0,c)>0$) and decreases that of below-average samples ($A(x_0,c)<0$), proportionally to how far each sample is from the mean reward. Since the advantage is centered, $\E_{x_0 \sim \pold(\cdot \mid c)}[A(x_0,c)]=0$, the perturbation integrates to zero and $\qeta$ is automatically normalized: $\int \qeta(x_0 \mid c)\,\dif x_0 = \int \pold\,\dif x_0 + \eta\int A\,\pold\,\dif x_0 = 1 + 0 = 1$.
If $1 + \eta A(x_0, c) \geq 0$ for all $x_0$ \rev{(equivalently, the rescaled form of the pointwise positivity condition $A + \gamma > 0$ from \cref{sec:well-conditioned} with $\gamma = 1/\eta$ and $\lambda = 0$)}, $\qeta(x_0 \mid c) \geq 0$ and therefore a valid probability distribution. It also improves the expected reward by $\eta\,\Var_{\pold(\cdot\mid c)}(r(\cdot,c))\geq 0$, since $\E_{\qeta}[r]-\E_{\pold}[r] = \eta\,\E_{\pold}[A\,r] = \eta\,\E_{\pold}[A^2] = \eta\,\Var_{\pold}(r)$, using $\E_{\pold}[A]=0$ and the definition of variance.
\rev{The linear tilt $\qeta$ in \eqref{eq:additive target} is the first-order natural-gradient update for maximizing expected reward under the Fisher-Rao metric on probability distributions \citep{amari1998natural, kakade2001natural}. We make this connection precise.}

\begin{proposition}[Fisher-Rao natural-gradient direction]
\label{prop:fisher-rao}
\rev{Fix a prompt $c$ and let $F(p) = \E_p[r(\cdot, c)]$. Under the Fisher-Rao metric, the natural-gradient direction of $F$ at $p = \pold(\cdot \mid c)$ is the tangent vector
\begin{align*}
  \delta p(x_0)
  = A(x_0, c) \, \pold(x_0 \mid c)\,, \qquad
  A(x_0, c) = r(x_0, c) - \E_{\pold(\cdot \mid c)}[r(\cdot, c)]\,.
\end{align*}
A first-order step of size $\eta$ along this direction yields the additive tilt $\qeta(\cdot \mid c) = (1 + \eta A(\cdot, c)) \, \pold(\cdot \mid c)$ from \eqref{eq:additive target}.}
\end{proposition}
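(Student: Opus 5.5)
We would prove the proposition by computing the Riesz representative of the differential of $F$ with respect to the Fisher-Rao inner product, restricted to the tangent space of the probability simplex at $\pold$. Concretely, we work on the manifold of positive densities $p$ with $\int p \, \dif x_0 = 1$. The tangent space at $p$ is the set of signed perturbations $\delta p$ with $\int \delta p \, \dif x_0 = 0$. On this space the Fisher-Rao metric is
\begin{align*}
  \inner{\delta p_1}{\delta p_2}_p = \int \frac{\delta p_1(x_0) \, \delta p_2(x_0)}{p(x_0)} \, \dif x_0\,.
\end{align*}
The natural-gradient direction is the unique tangent vector $g$ satisfying $\inner{g}{\delta p}_p = \dif F(p)[\delta p]$ for every tangent $\delta p$.

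The first step is the differential. Since $F(p) = \int r(x_0, c) \, p(x_0) \, \dif x_0$ is linear in $p$, we have $\dif F(p)[\delta p] = \int r \, \delta p \, \dif x_0$. Because $\int \delta p \, \dif x_0 = 0$, any constant may be subtracted from $r$. With the constant $\E_{\pold}[r]$ this gives
\begin{align*}
  \dif F(\pold)[\delta p] = \int A(x_0, c) \, \delta p(x_0) \, \dif x_0\,.
\end{align*}

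The second step checks the candidate $g = A \, \pold$.
\begin{itemize}
  \item It is tangent: $\int A \, \pold \, \dif x_0 = 0$ because the advantage is centered, as already noted in \cref{sec:additive target}.
  \item It represents the differential: $\inner{A \pold}{\delta p}_{\pold} = \int A \pold \, \delta p / \pold \, \dif x_0 = \int A \, \delta p \, \dif x_0$, which equals $\dif F(\pold)[\delta p]$ for all tangent $\delta p$.
\end{itemize}
Uniqueness follows because the metric is positive definite on the tangent space. Suppose two tangent vectors $g, g'$ both represent the differential. Then $\inner{g - g'}{\delta p}_{\pold} = 0$ for every tangent $\delta p$. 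Taking $\delta p = g - g'$ gives $\int (g - g')^2 / \pold \, \dif x_0 = 0$, so $g = g'$ almost everywhere. The direct check of the candidate also settles the only subtle point, namely that the natural gradient must lie in the mean-zero tangent space rather than be the unconstrained representative $r \, \pold$. It explains why centering by $\E_{\pold}[r]$ appears: the projection of $r$ onto mean-zero functions in $L^2(\pold)$ is exactly $A$.

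Finally, a first-order step of size $\eta$ from $\pold$ along $g$ gives $\pold + \eta \, A \, \pold = (1 + \eta A) \pold$, which is $\qeta$ in \eqref{eq:additive target}. This is the exponential map to first order. In the flat coordinates $p$ it is the exact straight-line step, and it stays a valid density under the positivity condition stated earlier.

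The main obstacle is rigor in infinite dimensions. We would need $A \in L^2(\pold)$ (for example, bounded rewards) so that $g$ has finite Fisher-Rao norm. We would also handle the metric on signed tangent vectors formally, or alternatively via the $\sqrt{p}$ embedding into the $L^2$ sphere. We would state these as standing assumptions rather than prove them.
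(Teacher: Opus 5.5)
Your proof is correct and follows the paper's argument. The paper writes tangent vectors as $\delta p = u\,p$ with $\E_p[u]=0$ and inner product $\E_p[uv]$, which is your metric $\int \delta p_1\,\delta p_2/p\,\dif x_0$ in different coordinates, and it likewise identifies the Riesz representative of $\int r\,\delta p\,\dif x_0$ as $u^* = r - \E_p[r]$ before taking the first-order step $(1+\eta A)\pold$. Your explicit tangency check, uniqueness argument, and $L^2(\pold)$ integrability assumption make rigorous the steps the paper leaves implicit.
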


\rev{The proof is in \cref{app:fisher-rao}. The Fisher-Rao metric is the canonical Riemannian metric on probability distributions and underlies natural policy gradient methods in reinforcement learning \citep{kakade2001natural}. \cref{prop:fisher-rao} carries this picture over to flow models: \advantageflow can be read as natural-gradient ascent on expected reward, with the rollout distribution as the base point and $\eta$ as the step size.}

\subsection{Advantage-Weighted Loss}
\label{sec:target to loss}

To move the learned policy towards $\qeta$, we minimize the prediction loss against it,
\begin{align}
  \cL_{\qeta}(\theta)
  = \E_{x_0 \sim \qeta(\cdot \mid c), \, \epsilon, \, t}
  [\norm{x_0 - \ftheta(x_t, t, c)}^2]\,.
  \label{eq:qeta-loss}
\end{align}
Changing measure to samples from $\pold$ gives
\begin{align}
  \cL_{\qeta}(\theta)
  = \E_{x_0 \sim \pold, \, \epsilon, \, t}
  [(1 + \eta A) \, \norm{x_0 - \ftheta(x_t, t, c)}^2]\,,
\end{align}
where the dependence of $A$ on $x_0, c$ is suppressed. Expanding the weight separates these into
\begin{align}
  \cL_{\qeta}(\theta)
  = \underbrace{\E_{x_0 \sim \pold, \, \epsilon, \, t}
  [\norm{x_0 - \ftheta(x_t, t, c)}^2]}_{\text{reward-independent}} {} + {}
  \eta \, \underbrace{\E_{x_0 \sim \pold, \, \epsilon, \, t}
  [A \, \norm{x_0 - \ftheta(x_t, t, c)}^2]}_{\text{reward-dependent}}\,.
  \label{eq:decomposed loss}
\end{align}
Only the reward-dependent term distinguishes good samples from bad. The reward-independent term is ordinary flow matching on samples from $\pold$. It pulls the policy towards the rollout distribution but does not use reward values.

\subsection{Variance Reduction by Rollout Policy}
\label{sec:variance reduction}

Variance reduction in policy optimization is a well-studied topic \citep{sutton00policy,baxter01infinitehorizon,munos06geometric}. The reward-independent term is defined on individual samples $x_0$, but the population minimizer of this term is the conditional mean $\E_{x_0 \sim \pold, \epsilon}[x_0 \mid x_t, t, c]$. If the rollout policy is well-trained, its prediction $\fold(x_t, t, c)$ approximates this conditional mean. We then are better off by replacing $x_0$ with the prediction $\fold(x_t, t, c)$ in reward-independent term of \cref{eq:decomposed loss}.

\begin{proposition}[Variance reduction]
\label{prop:two-anchor}
\rev{Fix a prompt $c$ and suppose $\fold(x_t, t, c) = \E_{x_0 \sim \pold(\cdot \mid c), \epsilon}[x_0 \mid x_t, t, c]$. Up to a $\theta$-independent constant,
\begin{align*}
  \E_{x_0 \sim \pold, \, \epsilon, \, t}[\norm{x_0 - \ftheta(x_t, t, c)}^2]
  = \E_{x_0 \sim \pold, \, \epsilon, \, t}[\norm{\fold(x_t, t, c) - \ftheta(x_t, t, c)}^2]\,.
\end{align*}
Moreover, the single-sample gradient of the right-hand side has no larger variance than that of the left-hand side.}
\end{proposition}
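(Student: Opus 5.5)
The plan is to prove the identity by the standard bias--variance (Pythagorean) decomposition of squared error around a conditional mean. Then I would handle the variance claim as a conditional-variance comparison of single-sample gradients. Throughout, fix $c$ and write $m = \fold(x_t, t, c)$ and $f = \ftheta(x_t, t, c)$. By hypothesis, $m = \E[x_0 \mid x_t, t, c]$ under the joint law $x_0 \sim \pold(\cdot \mid c)$, $\epsilon \sim \cN(0, I)$, $t$ uniform.

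\textbf{Step 1 (identity).} Expand $x_0 - f = (x_0 - m) + (m - f)$, so that
\begin{align*}
  \norm{x_0 - f}^2
  = \norm{x_0 - m}^2 + 2 \inner{x_0 - m}{m - f} + \norm{m - f}^2\,.
\end{align*}
Take expectations by first conditioning on $(x_t, t)$. The vector $m - f$ is $(x_t, t)$-measurable, and $\E[x_0 - m \mid x_t, t, c] = 0$, so the cross term vanishes by the tower property. The first term, $\E\norm{x_0 - m}^2$, does not involve $\theta$. It is the claimed constant, and it is finite provided $\pold$ has a second moment, which I would state as a standing integrability assumption. This gives the equality up to a $\theta$-independent constant.

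\textbf{Step 2 (gradients).} Write $J = \partial f / \partial \theta$. The single-sample gradients are $g_L = -2 J^\top (x_0 - f)$ for the left-hand side and $g_R = -2 J^\top (m - f)$ for the right-hand side. Since $J$ and $f$ depend only on $(x_t, t)$, we have $g_R = \E[g_L \mid x_t, t, c]$. In particular both gradients have the same mean, so the two objectives have the same expected gradient, consistent with Step 1. By the law of total variance, applied componentwise or to the trace of the covariance,
\begin{align*}
  \Var(g_L) = \E[\Var(g_L \mid x_t, t)] + \Var(\E[g_L \mid x_t, t]) = \E[\Var(g_L \mid x_t, t)] + \Var(g_R) \succeq \Var(g_R)\,.
\end{align*}
This is Rao--Blackwellization, and the conditional-variance term is $4\,\E[J^\top \mathrm{Cov}(x_0 \mid x_t, t) J] \succeq 0$.

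\textbf{Main obstacle.} The main difficulty is making precise what ``single-sample gradient'' means, and in what sense the variance comparison holds. The key point is that $g_R$ and $g_L$ must be evaluated at the same $(x_t, t)$, with $g_R$ simply dropping the residual randomness of $x_0$ given $x_t$. I would define both gradients as random vectors over the joint draw $(x_0, \epsilon, t)$. I would then state the conclusion as an ordering of covariance matrices in the Loewner sense, which implies the scalar statement for the trace and for every direction. I would also note the regularity needed: differentiability of $\ftheta$ in $\theta$, square-integrability of $J^\top x_0$, and the assumption that exchanging the gradient and the expectation is valid.
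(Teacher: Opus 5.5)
Your proposal is correct and follows the paper's proof essentially step for step. It uses the same split $x_0 - \ftheta = (x_0 - \fold) + (\fold - \ftheta)$ with the cross term removed by $\E[x_0 - \fold \mid x_t, t] = 0$, then observes that the rollout gradient is the conditional expectation of the sample gradient and applies Rao--Blackwell. Your explicit Jacobian form, the Loewner-order statement via the law of total covariance, and the stated integrability and interchange assumptions are refinements that the paper leaves implicit.
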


\rev{The proof is in \cref{app:two-anchor-equivalence}. The key observation is that $x_0$ is a single noisy draw from $\pold(\cdot \mid c)$, while $\fold(x_t, t, c)$ is trained to match its conditional mean. Using $\fold$ as the regression target removes sample noise. The population minimizer is unchanged, and the gradient estimator is lower-variance by the Rao-Blackwell argument \citep{casella96raoblackwellisation}.} Substituting this into \eqref{eq:decomposed loss} and rescaling by $\eta = 1 / \gamma$ gives the objective of \advantageflow in \cref{sec:algorithm} without reference regularization,
\begin{align}
  \E_{x_0 \sim \pold, \, \epsilon, \, t}
  [A(x_0,c) \, \norm{x_0 - \ftheta(x_t, t, c)}^2 +
  \gamma \, \norm{\ftheta(x_t, t, c) - \fold(x_t, t, c)}^2]\,.
  \label{eq:afm-population-loss}
\end{align}

\subsection{Connection to DiffusionNFT}
\label{sec:diffusionnft}

\diffusionnft is arguably the first forward-process RL algorithm for flow models with impressive empirical results \citep{diffusionnft}. Our work differs from it in several aspects, which we want to highlight first.

First, a major gap between theoretical justification of \diffusionnft and its implementation exits. Specifically, the velocity losses of implicit positive and negative policies, which are analyzed, are replaced by adaptively-weighted prediction losses when implemented. In contrast, \advantageflow is both analyzed (\cref{sec:analysis}) and implemented (\cref{sec:algorithm}) with prediction losses. Second, the loss of \diffusionnft is derived based on preference between implicit positive and negative samples. In contrast, the \advantageflow loss is derived from the first principles, directly from the objective of reward-weighted flow loss minimization (\cref{sec:analysis}). This yields an embarrassingly simple algorithm that works well empirically (\cref{sec:experiments}). Finally, \diffusionnft is analyzed using the velocity loss while our analysis uses the prediction loss.

\advantageflow and \diffusionnft can also be related as follows: replace the velocity losses in \diffusionnft by prediction losses, expand the squares of implicit positive and negative policies, and drop the terms independent of $\theta$. Then we get (\cref{app:diffusionnft-algebra}) 
\begin{align}
  \beta A(x_0, c) \, \norm{x_0 - \ftheta(x_t, t, c)}^2 +
  \gammaNFT(A) \, \norm{\ftheta(x_t, t, c) - \fold(x_t, t, c)}^2\,,
  \label{eq:diffusionnft loss}
\end{align}
where $\gammaNFT(A) = \beta(\beta - A(x_0, c))$. For $\beta = 1$, $\gammaNFT(A) = 1 - A(x_0, c)$ and the loss becomes a special case of \eqref{eq:advantageflow loss}, where $\lambda = 0$ and the rollout regularizer is chosen adaptively, as in \cref{sec:well-conditioned}.

\section{Experiments}
\label{sec:experiments}

We experiment with $512 \times 512$ images generated by Stable Diffusion 3.5 Medium \citep{esser24scaling}. All experiments are implemented in the \diffusionnft code base\footnote{\url{https://github.com/NVlabs/DiffusionNFT}} \citep{diffusionnft}, which is based on that of \citet{flowgrpo}. We consider two kinds of reward models: rule- and model-based. \emph{OCR} is ruled-based and evaluates text rendering. \emph{GenEval} \citep{ghosh23geneval} is ruled-based and evaluates image composition. \emph{PickScore} \citep{kirstain23pickapic}, \emph{HPSv2.1} \citep{wu23human}, and \emph{CLIPScore} \citep{hessel21humancentric} are learned reward models that measure image quality, text-to-image alignment, and human preference. For OCR, we use the corresponding training and test sets. For all other reward models, we use the Pick-a-Pic dataset \citep{kirstain23pickapic}.

We experiment with two variants of \advantageflow from \cref{sec:well-conditioned}: \aflow{1.1} with non-adaptive rollout regularization and \aflow{1{-}A} with the adaptive one. We compare them to \flowgrpo \citep{flowgrpo} and \diffusionnft \citep{diffusionnft}. All algorithms are trained on $10$ denoising steps (SDE in \flowgrpo and ODE in the others) and evaluated on $40$. The learned and rollout policies are implemented by LoRA with $\alpha = 64$ and $r = 32$. We depart from the original experiments of \citet{diffusionnft} only by having an effective batch size $128$: $32$ prompts with $4$ samples per prompt. The number of prompts is comparable ($1.5$ times lower) but the number of samples is $6$ times lower. This gives us a similar prompt coverage at a much lower computational cost. We run all algorithms for $1\,000$ steps. Each run takes up to $35$ H100 GPU hours and all runs can be reproduced in a week on a single H100 node with $8$ GPUs. Despite this, we observe similar trends to \citet{diffusionnft}. To reduce the variability of runs, we set the reference regularization strength, in both \advantageflow and \diffusionnft, to $\lambda = 0.001$. We conduct one larger-scale experiment, for $125$ H100 GPU hours, in \cref{sec:larger-scale experiment}.

\begin{figure}[t!]
  \centering
  \includegraphics[width=5.6in]{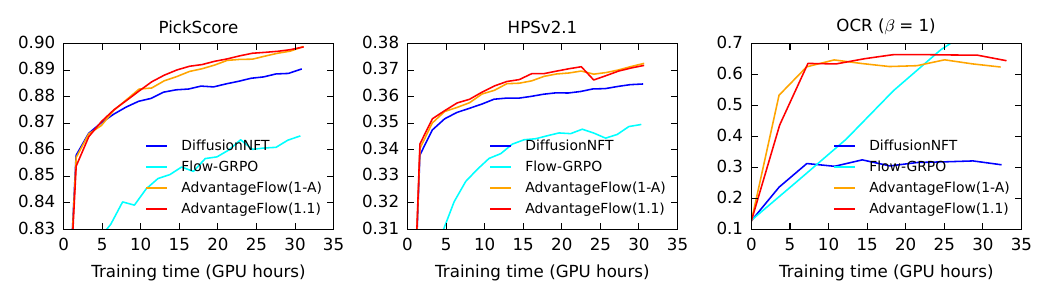} \\ \vspace{-0.1in}
  \caption{Evaluation of \advantageflow, \diffusionnft, and \flowgrpo on PickScore, HPSv2.1, and OCR. All rewards are measured on test sets and reported as a function of training time.}
  \label{fig:main}
\end{figure}

\textbf{Baseline comparison.} We start by comparing \advantageflow to \diffusionnft and \flowgrpo. Our results are reported in \cref{fig:main}, and we observe two main trends. First, \aflow{1{-}A} outperforms \diffusionnft: it attains the highest reward of \diffusionnft in half the training time on both PickScore and HPSv2.1. For OCR, it attains twice the reward. This shows that despite the similarities (\cref{sec:diffusionnft}), the direct optimization of the prediction loss in \advantageflow yields major benefits. Second, \aflow{1.1} performs a bit better than \aflow{1{-}A}. This suggests that forward-process RL in flow models is stabilized by \emph{rollout policy regularization}, not necessarily adaptive (\cref{sec:diffusionnft}). Finally, \flowgrpo performs the worst in all experiments but the last one, which is consistent with the results in \citet{diffusionnft}.

The original OCR experiment in \citet{diffusionnft} is with \diffusionnft for $\beta = 0.1$. We repeat it in \cref{fig:extra}c, where the prediction and rollout coefficients in \advantageflow are set accordingly (\cref{sec:diffusionnft}). After this, \aflow{0.01{-}0.1A} performs similarly to \diffusionnft, and both outperform \flowgrpo. In all remaining experiments, we only compare to \diffusionnft, the best baseline so far.

\begin{figure}[t!]
  \centering
  \includegraphics[width=5.6in]{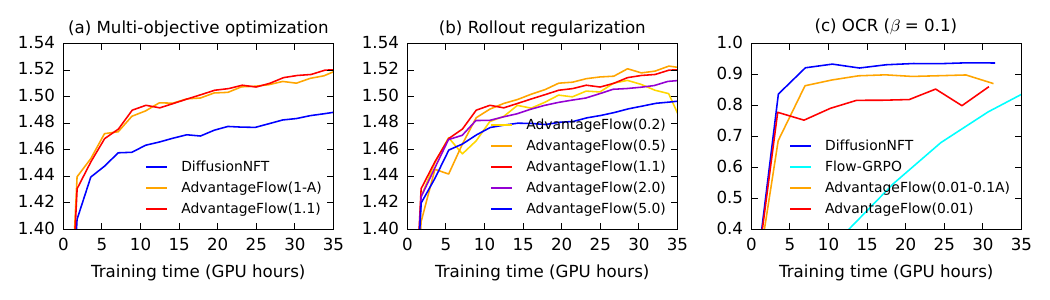} \\ \vspace{-0.1in}
  \caption{Multi-objective experiments and additional ablation studies. All rewards are measured on test sets and reported as a function of training time.}
  \label{fig:extra}
\end{figure}

\begin{figure}[t!]
  \centering
  \includegraphics[width=5.6in]{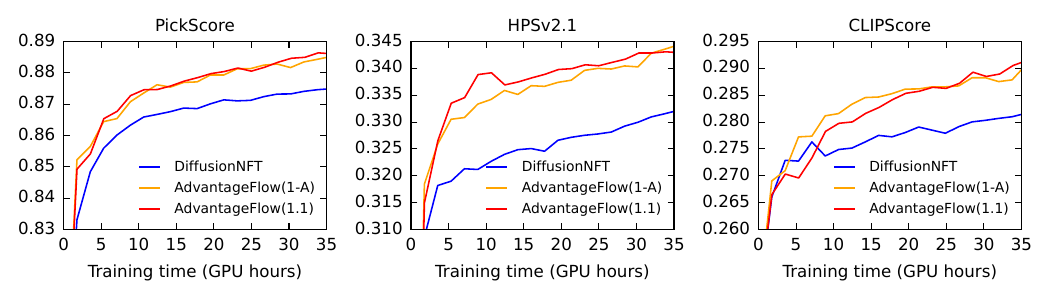} \\ \vspace{-0.1in}
  \caption{Evaluation of \advantageflow and \diffusionnft on multi-objective optimization. We optimize the sum of PickScore, HPSv2.1, and CLIPScore. All rewards are measured on test sets and reported as a function of training time.}
  \label{fig:moo}
\end{figure}

\textbf{Multi-objective optimization.} We also experiment with optimizing multiple rewards by training on the sum of PickScore, HPSv2.1, and CLIPScore. We report our results for the combined reward in \cref{fig:extra}a and the individual rewards in \cref{fig:moo}. We observe two main trends. First, although we optimize the combined reward, we maximize each individual reward in \cref{fig:moo}. This shows that simple variants of our method can be used for multi-objective optimization. Second, both variants of \advantageflow attain the highest reward of \diffusionnft in less than half the training time. We conduct a larger-scale version of this experiment, where the number of samples per prompt increases to $16$, in \cref{sec:larger-scale experiment}. We observe that both variants of \advantageflow attain the highest reward of \diffusionnft in less than a third training time. We show several examples of image generation improvements in \cref{fig:teaser} and more in \cref{sec:larger-scale experiment}.

\textbf{Rollout regularization strength.} The main tunable parameter of \advantageflow is the strength of rollout regularization. In \cref{fig:extra}b, we vary it from $0.2$ to $5$, and observe that the performance drops when the strength is either too low or too high. The former leads to instability and the latter slows down the learning. The highest reward is attained when the strength is close to one. This is the setting in all our experiments but those in \cref{fig:extra}c. We do not experiment with the reference regularization strength $\lambda$ because this term is standard in reinforcement learning.

\textbf{GenEval evaluation.} One potential concern about the gains in \cref{fig:main,fig:extra} is that they are due to reward hacking. To mitigate this concern, we evaluate the policies from \cref{fig:extra}a using GenEval \citep{ghosh23geneval}. Our results are reported in \cref{tab:geneval} and we observe the following trends. First, \advantageflow improves over the original base model, with or without \emph{classifier-free guidance (cfg)}. Second, \advantageflow without cfg outperforms the base model with cfg. This is a testament to the ability of RL to learn highly non-trivial policies through reward maximization. Finally, we note that \advantageflow performs generally at least as well as \diffusionnft.

\begin{table}[t!]
  \centering
  {\footnotesize
  \begin{tabular}{l|rrrrrr|r} \hline
    Method & Single object & Two objects & Counting &
    Color & Position & Color binding & Overall \\ \hline
    \multicolumn{8}{c}{Without cfg} \\ \hline
    Base &
    0.743 & 0.217 & 0.128 & 0.454 & 0.045 & 0.102 & 0.268 \\
    \diffusionnft &
    0.971 & 0.795 & 0.609 & 0.787 & 0.242 & 0.495 & 0.638 \\
    \aflow{1{-}A} &
    0.975 & 0.835 & 0.596 & 0.832 & 0.247 & \textbf{0.600} & 0.671 \\
    \aflow{1.1} &
    \textbf{0.984} & \textbf{0.863} & \textbf{0.737} & \textbf{0.837} & \textbf{0.260} & 0.592 & \textbf{0.700} \\ \hline
    \multicolumn{8}{c}{With cfg} \\ \hline
    Base &
    0.978 & 0.767 & 0.534 & 0.789 & 0.212 & 0.490 & 0.617 \\
    \diffusionnft &
    0.978 & 0.896 & 0.693 & 0.811 & 0.325 & 0.572 & 0.702 \\
    \aflow{1{-}A} &
    \textbf{0.987} & 0.929 & 0.721 & 0.845 & \textbf{0.335} & \textbf{0.692} & 0.743 \\
    \aflow{1.1} &
    \textbf{0.987} & \textbf{0.957} & \textbf{0.778} & \textbf{0.861} & 0.290 & 0.685 & \textbf{0.749} \\ \hline
  \end{tabular}} \vspace{0.1in}
  \caption{GenEval evaluation of \advantageflow, \diffusionnft, and base model.}
  \label{tab:geneval}
\end{table}

\section{Related Work}
\label{sec:related-work}

Reward fine-tuning of diffusion and flow models has developed along three lines: trajectory-level RL on the reverse sampling process, reward backpropagation through the sampler, and forward-process RL on scored samples. \advantageflow{} belongs to the third family. We briefly describe each and then discuss the closest forward-process methods. See \citet{uehara2024tutorial} for a broader review of RL-based fine-tuning of generative models.

\textbf{Trajectory-level RL.} DDPO \citep{black2024ddpo} and DPOK \citep{fan2023dpok} treat denoising as a finite-horizon Markov decision process and apply PPO-style updates to per-step Gaussian transitions. \flowgrpo{} \citep{flowgrpo} brings this idea to flow models by converting the sampling ODE into an SDE with tractable per-step likelihoods; later variants improve credit assignment and rollout efficiency \citep{xue2025dancegrpo,he2025tempflowgrpo,li2025mixgrpo,branchgrpo2025}. These methods optimize along the reverse trajectory, requiring credit assignment across many denoising steps. Forward-process RL avoids this by using the same prediction loss as pretraining.

\textbf{Reward backpropagation.} DRaFT \citep{clark2024draft}, AlignProp \citep{prabhudesai2023alignprop}, and Adjoint Matching \citep{domingoenrich2024adjoint} differentiate the reward through the sampler or an equivalent control objective. This is effective for differentiable rewards but does not directly apply to the black-box rule-based rewards, such as OCR, used in many text-to-image benchmarks.

\textbf{Forward-process RL.} The core idea - generate samples, score them, and retrain with the pretraining loss using reward-dependent weights - has appeared in several forms. Reward-weighted likelihood maximization \citep{lee2023aligning,peters2007rewardweighted} and ORW-CFM-W2 \citep{fan2025orwcfm} weight the flow-matching loss by reward values. AWM \citep{xue2025awm} uses advantage weighting. Centered Reward Distillation \citep{zhu2026crd} and AC-Flow \citep{acflow2025} also fit in this view.
\diffusionnft is the most closely related method \citep{diffusionnft} as discussed in \cref{sec:diffusionnft,app:diffusionnft-algebra}.

\section{Conclusions}
\label{sec:conclusions}

We introduce \advantageflow, a forward-process RL algorithm for rectified flow models based on advantage-weighted least squares. The algorithm fits a local additive reward-improving distribution, where the rollout regularization reduces variance and keeps the per-sample quadratic loss strictly convex. We evaluate \advantageflow on image generation tasks, and show that it outperforms both \flowgrpo and a state-of-the-art forward-process RL algorithm \diffusionnft.

\textbf{Limitations and future work.} Our analysis is local and at a population level. We assume a centered population advantage and that the rollout predictor $\fold$ is equal to the conditional mean under $p_{\old}$. In practice, finite prompt groups, empirical reward normalization, clipped advantages, LoRA, and finite optimization are used. Therefore, our theory should be viewed as motivating and justifying our objective, rather than the exact description of the finite-sample training dynamics.

Our experiments are only with image generation in Stable Diffusion 3.5 Medium. The performance of \advantageflow depends on the quality of reward models, which we do not investigate. Learned reward models may be misspecified and rule-based reward models capture only a narrow aspect of image quality. Finally, the rollout and reference regularization strengths are hyper-parameters that need to be set, and a more complete theory for choosing them is needed.

\newpage

\bibliographystyle{plainnat}
\bibliography{refs,brano}

\clearpage
\onecolumn
\appendix

\section{Larger-Scale Experiment}
\label{sec:larger-scale experiment}

Most results in \cref{sec:experiments} are computed in up to $35$ H100 GPU hours. Now we report results up to $125$ hours. To close the gap with the experiments in \citet{diffusionnft}, we set the effective batch size to $512$: $32$ prompts with $16$ samples per prompt. In this setting, both the number of prompts and samples are $1.5$ times lower than in \citet{diffusionnft}, and thus comparable.

\begin{figure}[t!]
  \centering
  \includegraphics[width=5.6in]{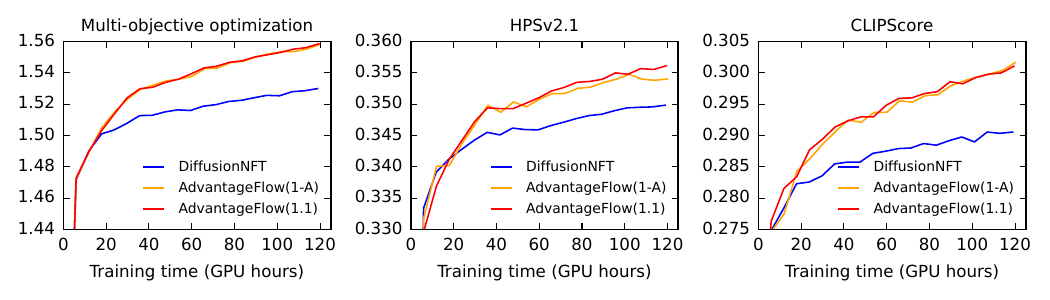} \\ \vspace{-0.1in}
  \caption{Evaluation of \advantageflow and \diffusionnft on multi-objective optimization. We report the combined reward, as well as HPSv2.1 and CLIPScore. All rewards are measured on test sets and reported as a function of training time.}
  \label{fig:100}
\end{figure}

All compared methods optimize the sum of PickScore, HPSv2.1, and CLIPScore, as in \cref{fig:extra}a, and their rewards are showed in \cref{fig:100}. We observe more significant gains than before: both variants of \advantageflow attain the highest reward of \diffusionnft in less than a third training time. This is likely caused by empirical loss functions being less noisy, since they are computed from $4$ times more samples, and this pronounces differences between the different optimized objectives.

We show qualitative examples of generated images after training for $125$ GPU hours next. \cref{fig:teaser generation} shows improvements in object generation, \cref{fig:teaser composition} shows improvements in counting and position understanding, and \cref{fig:teaser color} shows improvements in color understanding.

\begin{figure}[t!]
  \centering
  {\tiny
  \begin{tabular}{@{\ }rc@{\ }c@{\ }c@{\ }c@{\ }c@{\ }c@{\ }}
    & banana &
    toothbrush &
    cell phone &
    person and &
    broccoli and &
    toothbrush and \\
    & &
    &
    &
    stop sign &
    vase &
    carrot \\
    \rotatebox{90}{\hspace{0.1in}Base model + cfg} &
    \includegraphics[width=0.8in]{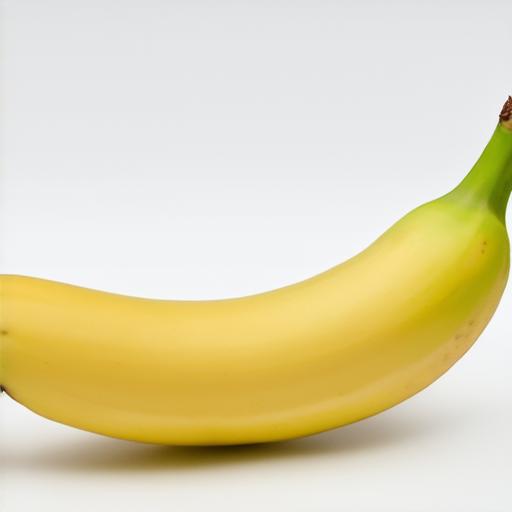} &
    \includegraphics[width=0.8in]{images/base/00448.jpg} &
    \includegraphics[width=0.8in]{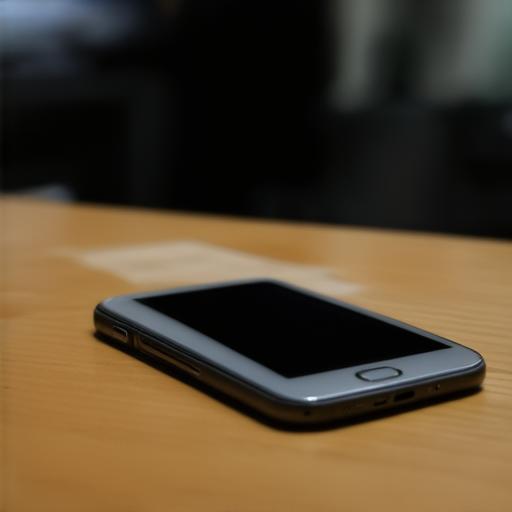} &
    \includegraphics[width=0.8in]{images/base/01009.jpg} &
    \includegraphics[width=0.8in]{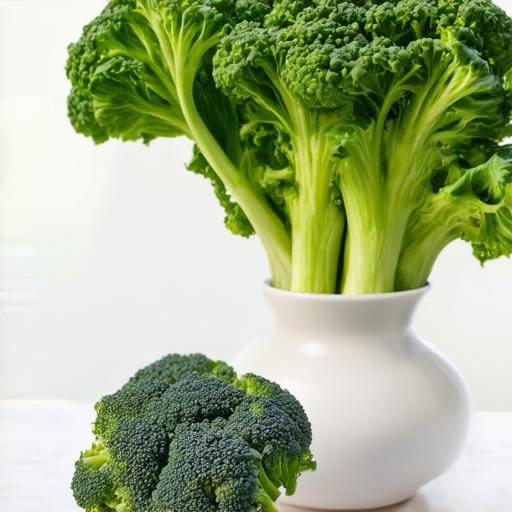} &
    \includegraphics[width=0.8in]{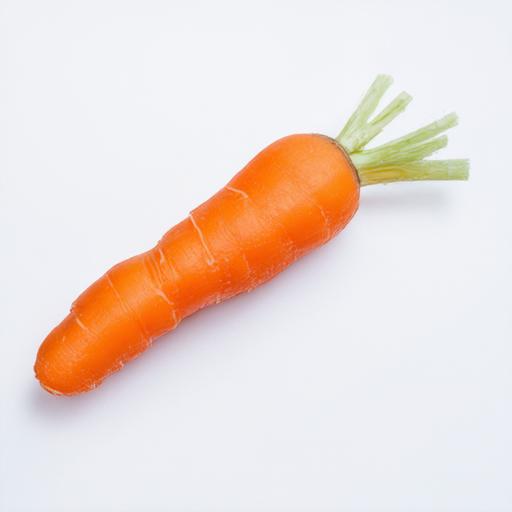} \\
    \rotatebox{90}{\hspace{0.15in}DiffusionNFT} &
    \includegraphics[width=0.8in]{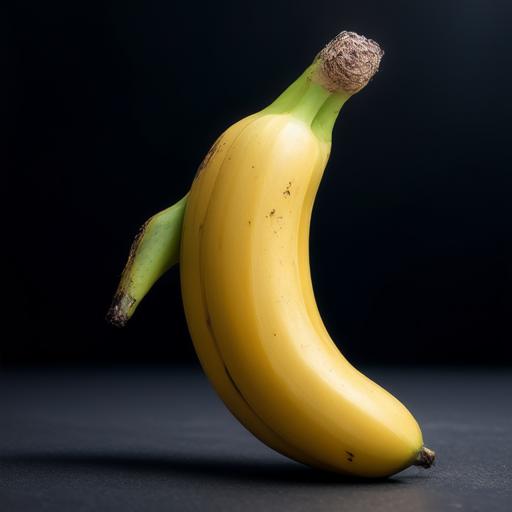} &
    \includegraphics[width=0.8in]{images/nft/00448.jpg} &
    \includegraphics[width=0.8in]{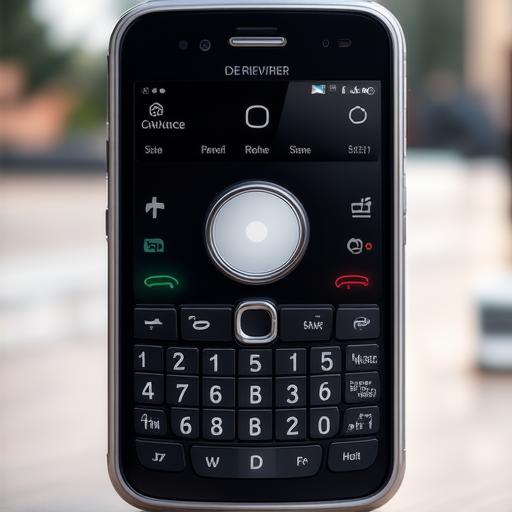} &
    \includegraphics[width=0.8in]{images/nft/01009.jpg} &
    \includegraphics[width=0.8in]{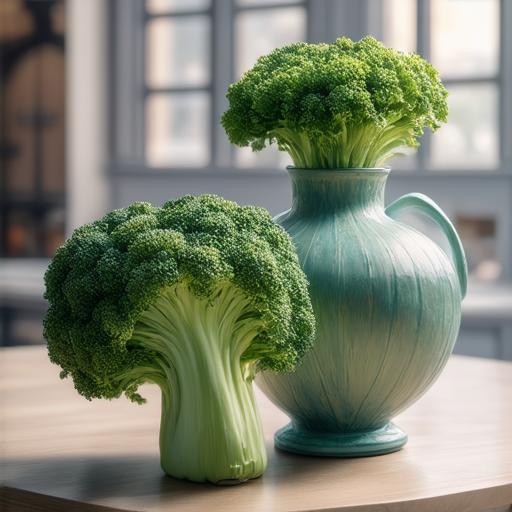} &
    \includegraphics[width=0.8in]{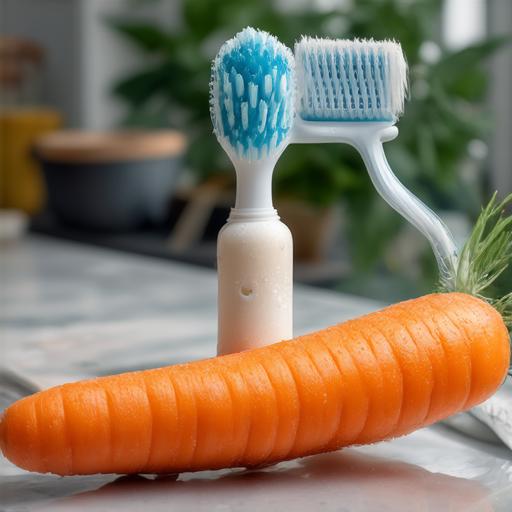} \\
    \rotatebox{90}{\color{red}AdvantageFlow (ours)} &
    \includegraphics[width=0.8in]{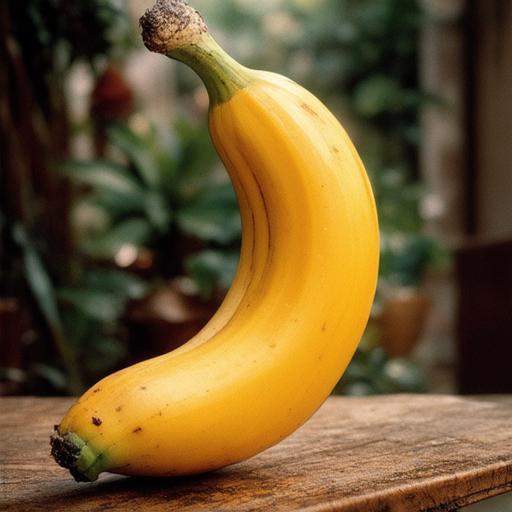} &
    \includegraphics[width=0.8in]{images/af/00448.jpg} &
    \includegraphics[width=0.8in]{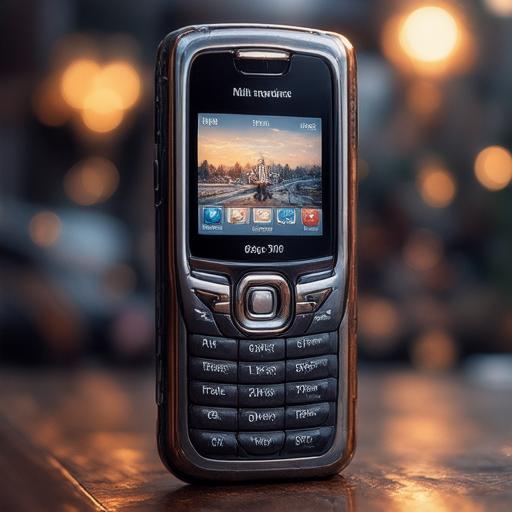} &
    \includegraphics[width=0.8in]{images/af/01009.jpg} &
    \includegraphics[width=0.8in]{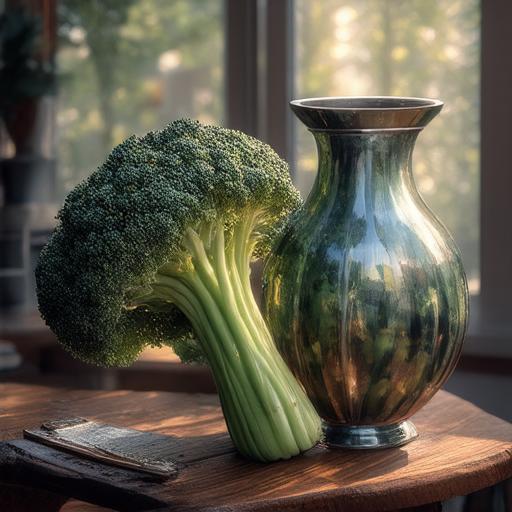} &
    \includegraphics[width=0.8in]{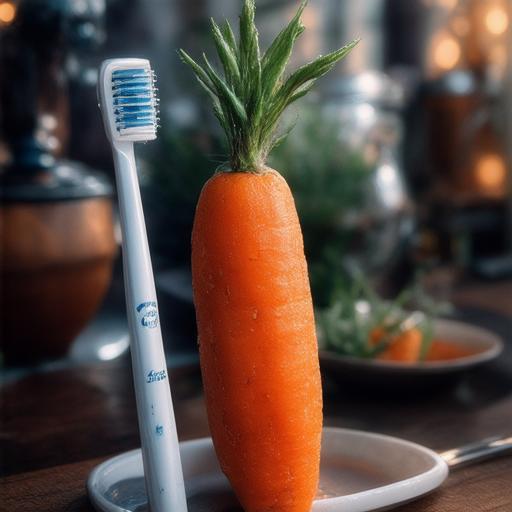}
  \end{tabular}} \vspace{0.1in}
  \caption{Images generated by \aflow{1.1} compared to \diffusionnft and base model (Stable Diffusion 3.5 Medium) with cfg. We show improvements in object generation.}
  \label{fig:teaser generation}
\end{figure}

\begin{figure}[t!]
  \centering
  {\tiny
  \begin{tabular}{@{\ }rc@{\ }c@{\ }c@{\ }c@{\ }c@{\ }c@{\ }}
    & two pizzas &
    three giraffes &
    four apples &
    dog right of &
    bus above &
    apple above \\
    & &
    &
    &
    teddy bear &
    boat &
    tv \\
    \rotatebox{90}{\hspace{0.1in}Base model + cfg} &
    \includegraphics[width=0.8in]{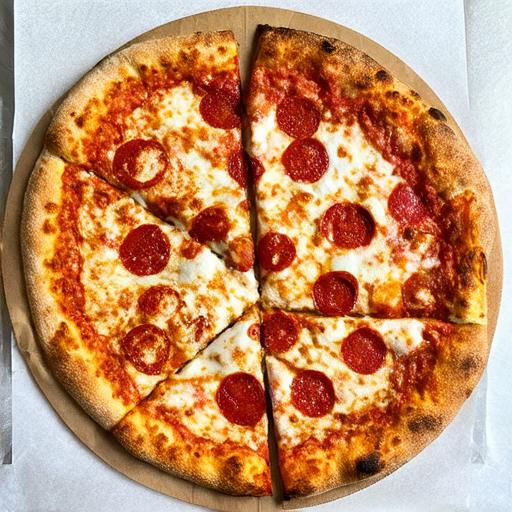} &
    \includegraphics[width=0.8in]{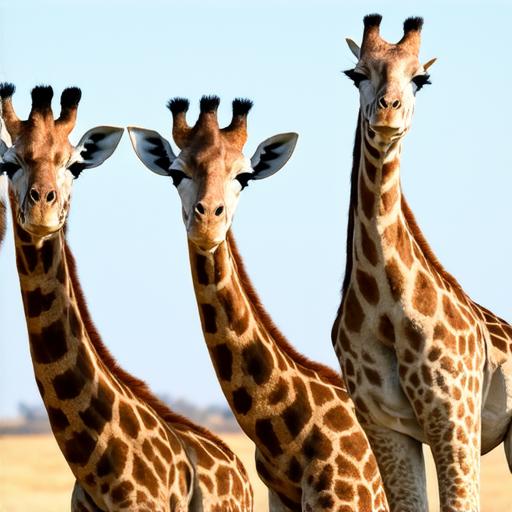} &
    \includegraphics[width=0.8in]{images/base/01658.jpg} &
    \includegraphics[width=0.8in]{images/base/01389.jpg} &
    \includegraphics[width=0.8in]{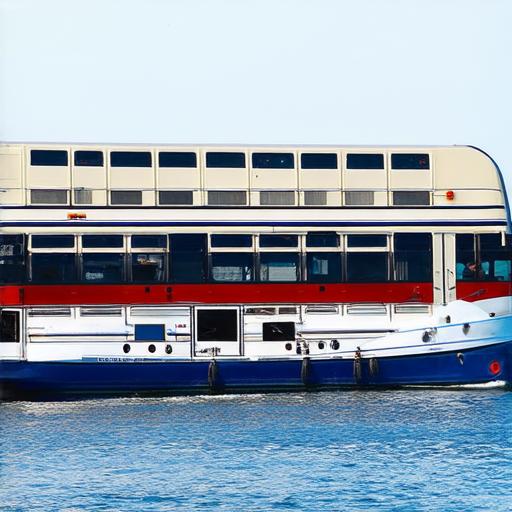} &
    \includegraphics[width=0.8in]{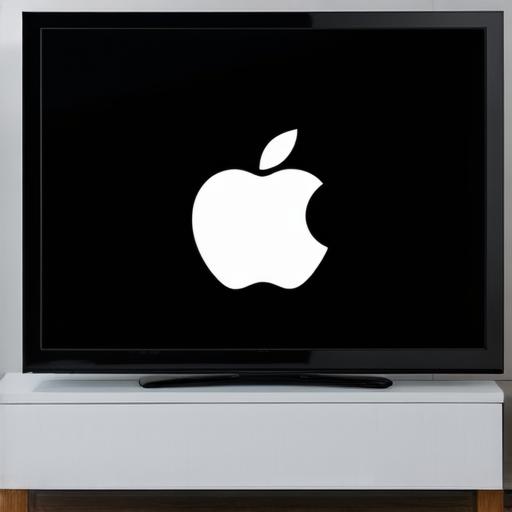} \\
    \rotatebox{90}{\hspace{0.15in}DiffusionNFT} &
    \includegraphics[width=0.8in]{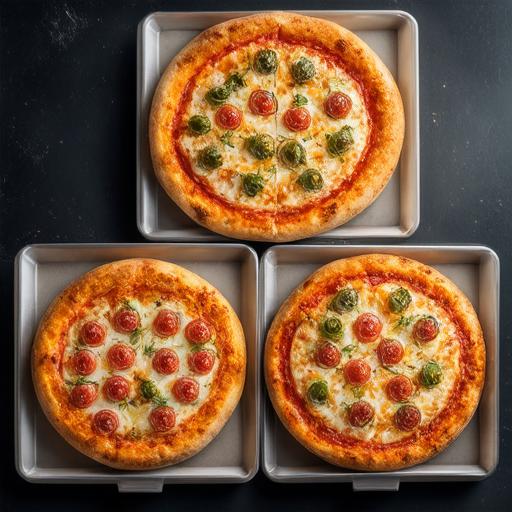} &
    \includegraphics[width=0.8in]{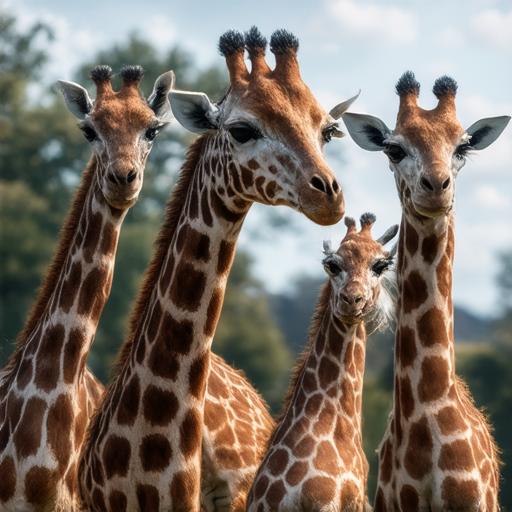} &
    \includegraphics[width=0.8in]{images/nft/01658.jpg} &
    \includegraphics[width=0.8in]{images/nft/01389.jpg} &
    \includegraphics[width=0.8in]{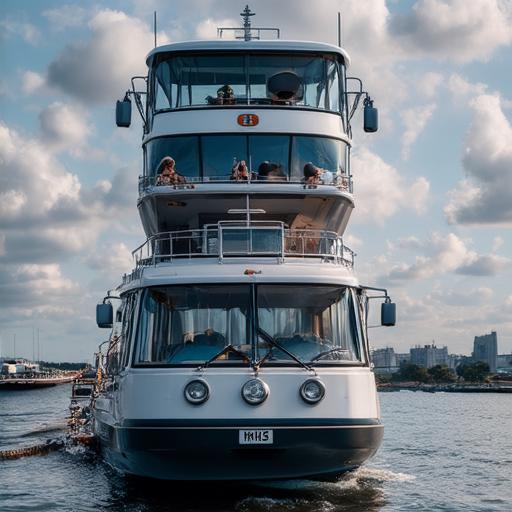} &
    \includegraphics[width=0.8in]{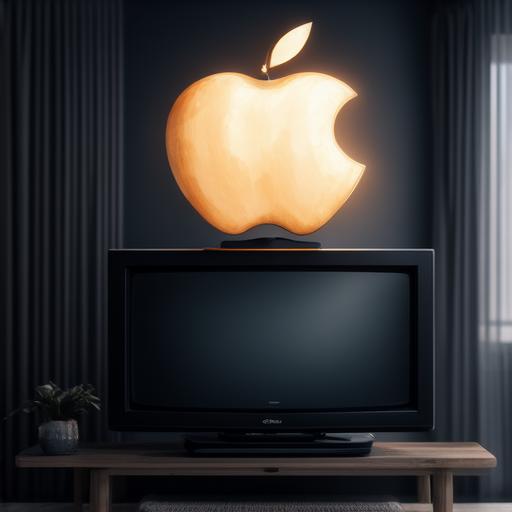} \\
    \rotatebox{90}{\color{red}AdvantageFlow (ours)} &
    \includegraphics[width=0.8in]{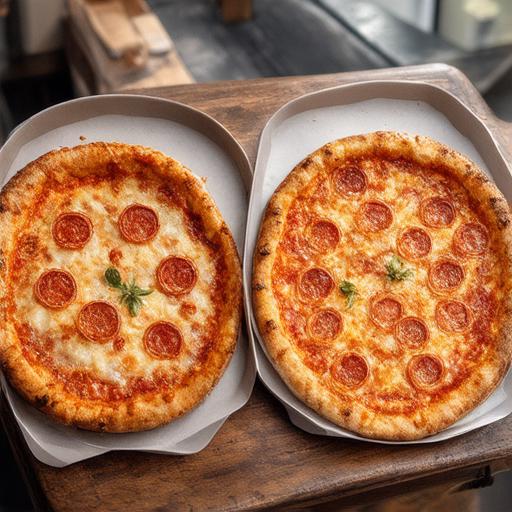} &
    \includegraphics[width=0.8in]{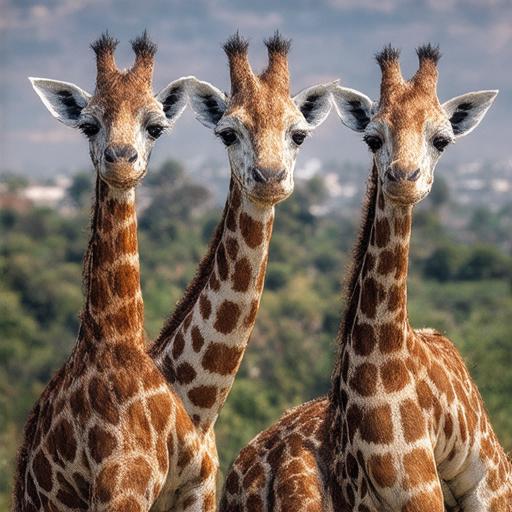} &
    \includegraphics[width=0.8in]{images/af/01658.jpg} &
    \includegraphics[width=0.8in]{images/af/01389.jpg} &
    \includegraphics[width=0.8in]{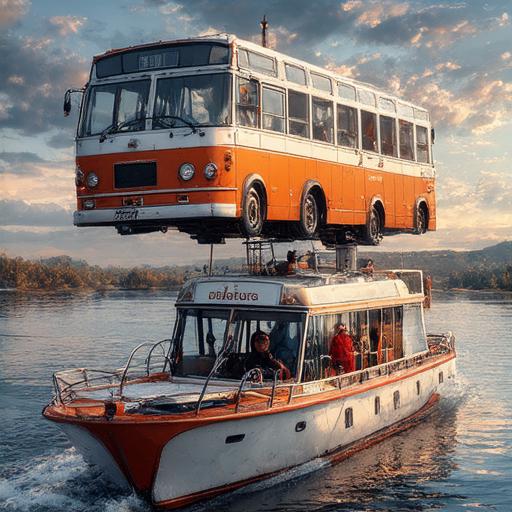} &
    \includegraphics[width=0.8in]{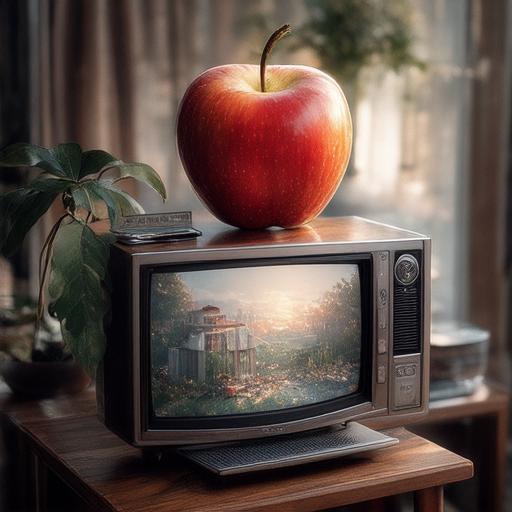}
  \end{tabular}} \vspace{0.1in}
  \caption{Images generated by \aflow{1.1} compared to \diffusionnft and base model (Stable Diffusion 3.5 Medium) with cfg. We show improvements in counting and position understanding.}
  \label{fig:teaser composition}
\end{figure}

\begin{figure}[t!]
  \centering
  {\tiny
  \begin{tabular}{@{\ }rc@{\ }c@{\ }c@{\ }c@{\ }c@{\ }c@{\ }}
    & red dog &
    blue carrot &
    purple backpack &
    yellow car and &
    red umbrella and &
    yellow sports ball and \\
    & &
    &
    &
    orange toothbrush &
    green cow &
    green boat \\
    \rotatebox{90}{\hspace{0.1in}Base model + cfg} &
    \includegraphics[width=0.8in]{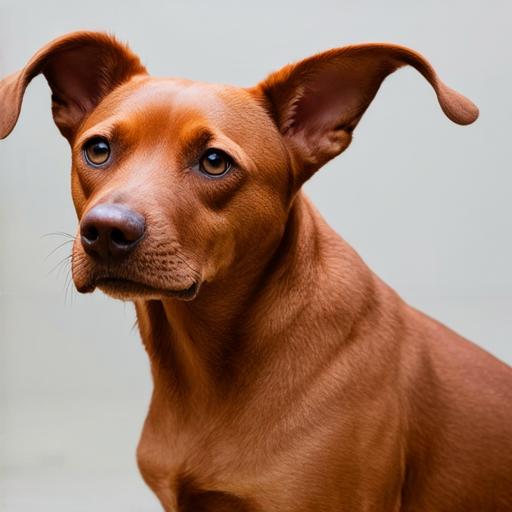} &
    \includegraphics[width=0.8in]{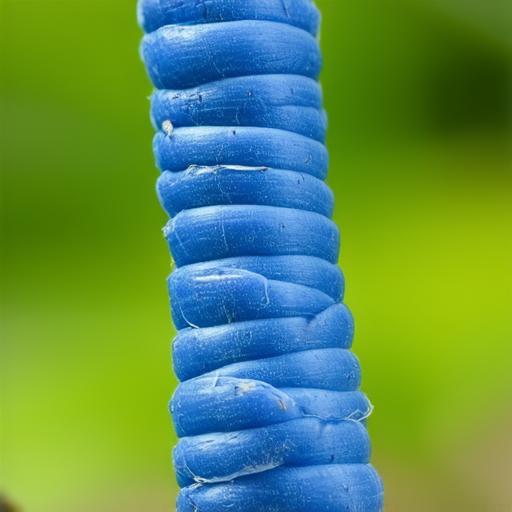} &
    \includegraphics[width=0.8in]{images/base/01490.jpg} &
    \includegraphics[width=0.8in]{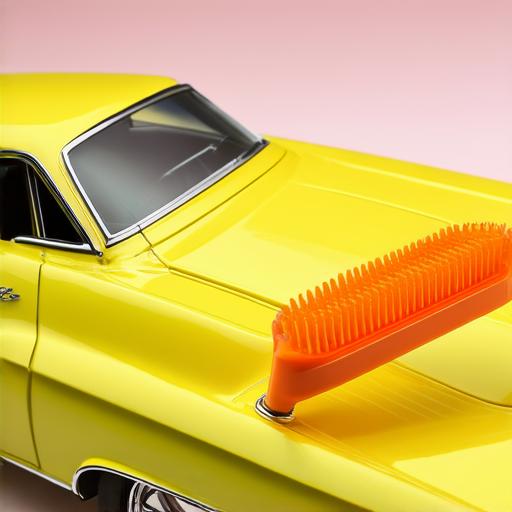} &
    \includegraphics[width=0.8in]{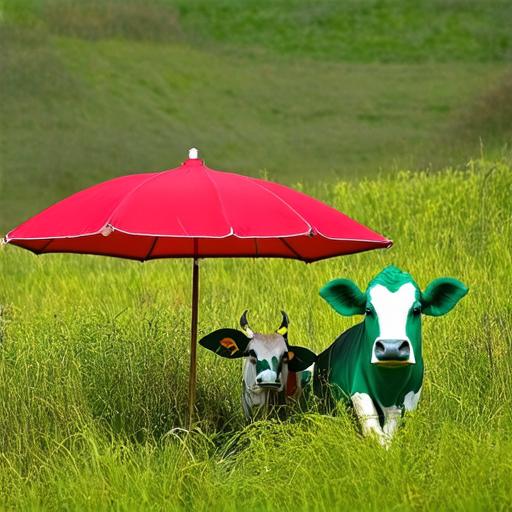} &
    \includegraphics[width=0.8in]{images/base/00134.jpg} \\
    \rotatebox{90}{\hspace{0.15in}DiffusionNFT} &
    \includegraphics[width=0.8in]{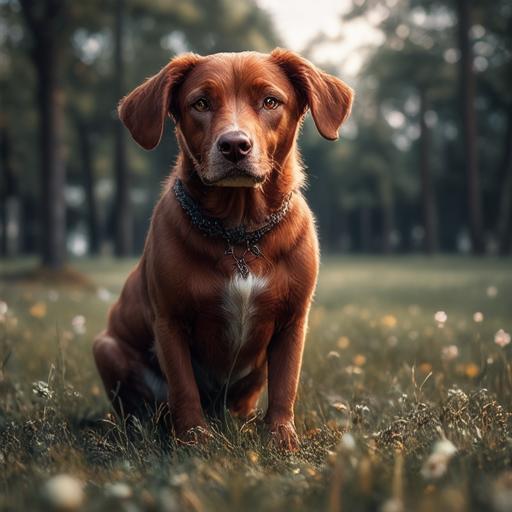} &
    \includegraphics[width=0.8in]{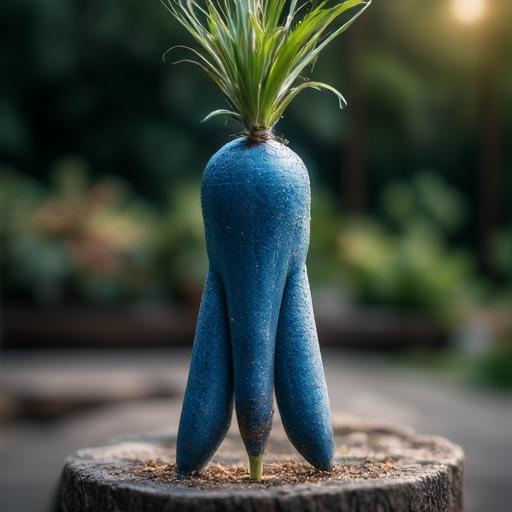} &
    \includegraphics[width=0.8in]{images/nft/01490.jpg} &
    \includegraphics[width=0.8in]{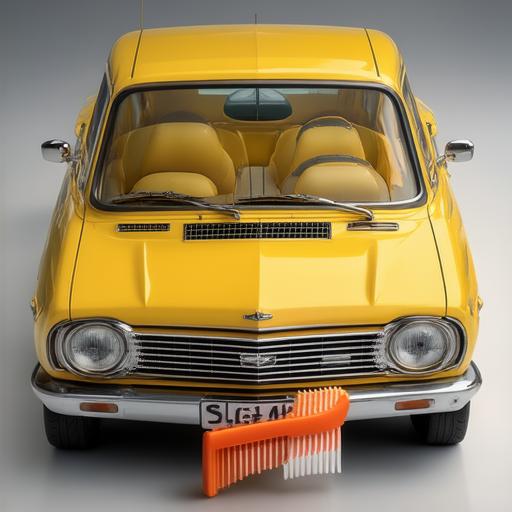} &
    \includegraphics[width=0.8in]{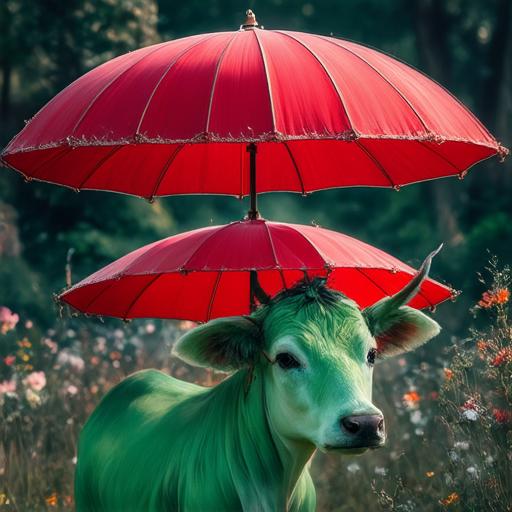} &
    \includegraphics[width=0.8in]{images/nft/00134.jpg} \\
    \rotatebox{90}{\color{red}AdvantageFlow (ours)} &
    \includegraphics[width=0.8in]{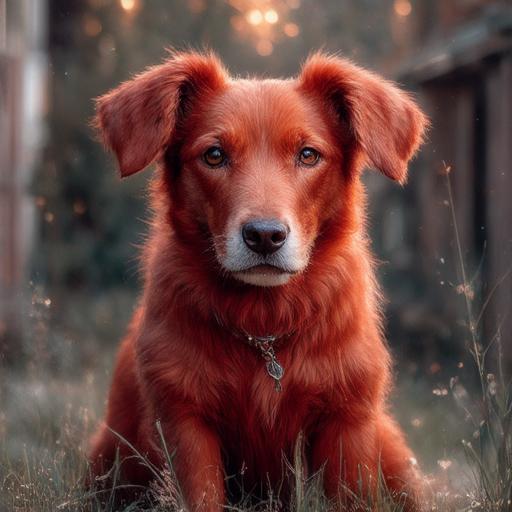} &
    \includegraphics[width=0.8in]{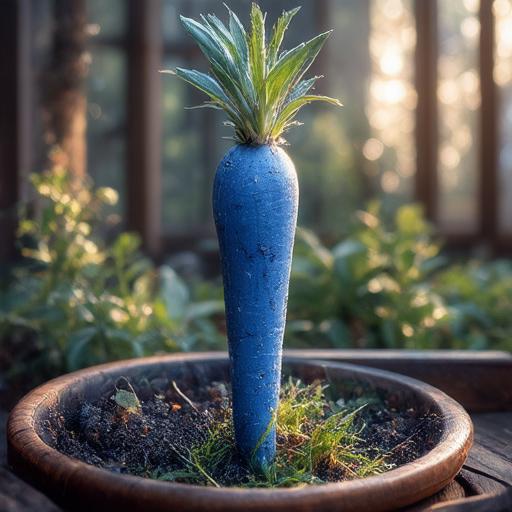} &
    \includegraphics[width=0.8in]{images/af/01490.jpg} &
    \includegraphics[width=0.8in]{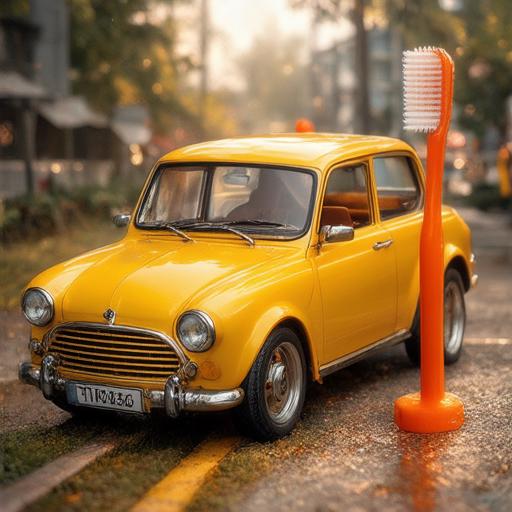} &
    \includegraphics[width=0.8in]{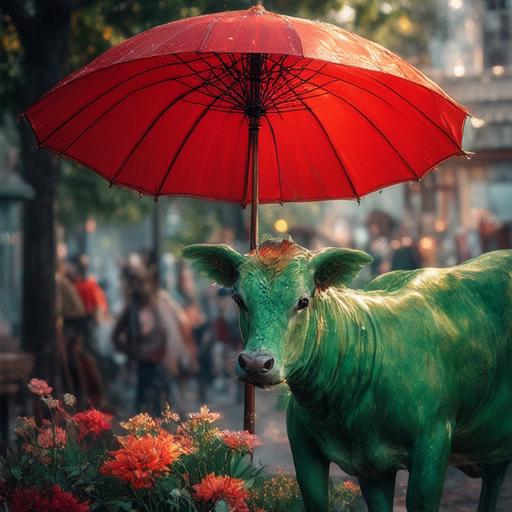} &
    \includegraphics[width=0.8in]{images/af/00134.jpg}
  \end{tabular}} \vspace{0.1in}
  \caption{Images generated by \aflow{1.1} compared to \diffusionnft and base model (Stable Diffusion 3.5 Medium) with cfg. We show improvements in color understanding.}
  \label{fig:teaser color}
\end{figure}

\section{Fisher--Rao Derivation}
\label{app:fisher-rao}

We prove \cref{prop:fisher-rao}, restated below for convenience.

\paragraph{\cref{prop:fisher-rao} (restated).}
Fix a prompt $c$ and let $F(p) = \E_p[r(\cdot,c)]$.
Under the Fisher--Rao metric, the natural-gradient direction of $F$ at $p = \pold(\cdot\mid c)$ is
\[
  \delta p(x_0) = A(x_0,c)\,\pold(x_0\mid c),\qquad
  A(x_0,c) = r(x_0,c) - \E_{\pold(\cdot\mid c)}[r(\cdot,c)].
\]
A first-order step of size $\eta$ yields the additive tilt
$\qeta(\cdot\mid c) = (1+\eta A(\cdot,c))\,\pold(\cdot\mid c)$ from \cref{eq:additive target}.

\begin{proof}
Let
\begin{equation}
\label{eq:fr-objective}
  F(p) = \E_p[r].
\end{equation}
For a perturbation $\delta p$ of a density $p$, the probability constraint requires
\begin{equation}
\label{eq:probability-tangent-constraint}
  \int \delta p(x)\,\dif x = 0.
\end{equation}
The first variation of $F$ is
\begin{equation}
\label{eq:first-variation}
  \delta F = \int r(x)\,\delta p(x)\,\dif x.
\end{equation}
Under the Fisher--Rao metric, tangent vectors are written as $\delta p(x)=u(x)p(x)$ with $\E_p[u]=0$, and the inner product is
\begin{equation}
\label{eq:fisher-rao-inner-product}
  \inner{u}{v}_p = \E_p[u(x)v(x)].
\end{equation}
The gradient direction $u^*$ must satisfy
\begin{equation}
\label{eq:fisher-rao-gradient-condition}
  \E_p[u^*(x)u(x)] = \E_p[r(x)u(x)]
\end{equation}
for every zero-mean tangent function $u$. Therefore
\begin{equation}
\label{eq:fisher-rao-gradient}
  u^*(x) = r(x) - \E_p[r].
\end{equation}
At $p=\pold$, this gives the Fisher--Rao natural-gradient direction
\begin{equation}
\label{eq:fisher-rao-natural-gradient-direction}
  \delta p(x) = A(x)\,\pold(x),
  \qquad
  A(x) = r(x)-\E_{\pold}[r].
\end{equation}
A first-order step of size $\eta$ gives
\begin{equation}
\label{eq:fisher-rao-additive-step}
  \qeta(x)
  = \pold(x) + \eta A(x)\pold(x)
  = \bigl(1+\eta A(x)\bigr)\pold(x).
\end{equation}
\end{proof}

\section{Two-Anchor Equivalence and Variance Reduction}
\label{app:two-anchor-equivalence}

We prove \cref{prop:two-anchor}, restated below for convenience.

\paragraph{\cref{prop:two-anchor} (restated).}
Fix a prompt $c$ and suppose $\fold(x_t,t,c)=\E_{x_0\sim\pold(\cdot\mid c),\,\epsilon}[x_0\mid x_t,t,c]$.
Up to a $\theta$-independent constant,
\[
  \E_{x_0\sim\pold,\,\epsilon,\,t}\bigl[\norm{x_0 - \ftheta(x_t,t,c)}^2\bigr]
  = \E_{x_0\sim\pold,\,\epsilon,\,t}\bigl[\norm{\fold(x_t,t,c) - \ftheta(x_t,t,c)}^2\bigr].
\]
Moreover, the single-sample gradient of the right-hand side has no larger variance than that of the left-hand side.

\begin{proof}
Fix a prompt $c$ and suppress it for brevity. Fitting the additive target gives
\begin{equation}
\label{eq:appendix-qeta-loss}
  \cL_{\qeta}(\theta)
  = \E_{\pold,\,\epsilon,\,t}
    \bigl[(1+\eta A(x_0))\,\norm{x_0-\ftheta(x_t,t)}^2\bigr].
\end{equation}
Expanding,
\begin{equation}
\label{eq:appendix-qeta-loss-expanded}
\begin{aligned}
  \cL_{\qeta}(\theta)
  &= \E_{\pold,\,\epsilon,\,t}
    \bigl[\norm{x_0-\ftheta(x_t,t)}^2\bigr] \\
  &\quad + \eta\,\E_{\pold,\,\epsilon,\,t}
    \bigl[A(x_0)\,\norm{x_0-\ftheta(x_t,t)}^2\bigr].
\end{aligned}
\end{equation}
Assume
\begin{equation}
\label{eq:old-bayes-predictor}
  \fold(x_t,t) = \E_{\pold}[x_0 \mid x_t,t].
\end{equation}
Then
\begin{equation}
\label{eq:target-decomposition}
  x_0-\ftheta = (x_0-\fold) + (\fold-\ftheta).
\end{equation}
Taking squared norms and expectations,
\begin{equation}
\label{eq:squared-decomposition}
\begin{aligned}
  \E\norm{x_0-\ftheta}^2
  &= \E\norm{x_0-\fold}^2
   + \E\norm{\fold-\ftheta}^2 \\
  &\quad + 2\,\E\inner{x_0-\fold}{\fold-\ftheta}.
\end{aligned}
\end{equation}
The cross term vanishes because
\begin{equation}
\label{eq:cross-term-zero}
  \E[x_0-\fold \mid x_t,t] = 0.
\end{equation}
Therefore
\begin{equation}
\label{eq:sample-to-anchor-equivalence}
  \E_{\pold,\,\epsilon,\,t}
  \bigl[\norm{x_0-\ftheta(x_t,t)}^2\bigr]
  = \E_{\pold,\,\epsilon,\,t}
    \bigl[\norm{\fold(x_t,t)-\ftheta(x_t,t)}^2\bigr] + C,
\end{equation}
where $C$ is independent of $\theta$. Substituting and rescaling by $1/\eta$ gives
\begin{equation}
\label{eq:appendix-afm-loss}
  \lossAFM(\theta)
  = \E_{\pold,\,\epsilon,\,t}
    \left[
      A(x_0)\,\norm{x_0-\ftheta(x_t,t)}^2
      + \frac{1}{\eta}\,\norm{\ftheta(x_t,t)-\fold(x_t,t)}^2
    \right],
\end{equation}
up to a $\theta$-independent constant.

\paragraph{Variance reduction.}
Let
\[
  g_{\mathrm{sample}} = \nabla_\theta\,\norm{x_0-\ftheta(x_t,t)}^2,\qquad
  g_{\mathrm{rollout}} = \nabla_\theta\,\norm{\fold(x_t,t)-\ftheta(x_t,t)}^2
\]
be the single-sample gradients of the two terms. Since $\fold(x_t,t)=\E[x_0\mid x_t,t]$,
\begin{equation}
\label{eq:rao-blackwell-gradient}
  g_{\mathrm{rollout}}
  = \E[g_{\mathrm{sample}} \mid x_t,t].
\end{equation}
The Rao--Blackwell inequality gives
\begin{equation}
\label{eq:rao-blackwell-variance}
  \Var(g_{\mathrm{rollout}}) \leq \Var(g_{\mathrm{sample}}).
\end{equation}
\end{proof}

\section{DiffusionNFT Algebra}
\label{app:diffusionnft-algebra}

We prove the \diffusionnft{} connection stated in \cref{sec:diffusionnft}.

\begin{proposition}[\diffusionnft{} as a special case of \advantageflow]
\label{prop:diffusionnft}
Let $r(x_0, c) \in[0,1]$ be a normalized optimality score and $A(x_0, c)=2r(x_0, c)-1$.
The \diffusionnft{} branch loss, after replacing velocity losses with prediction losses and dropping $\theta$-independent constants, equals
\[
  \beta A(x_0,c)\,\norm{x_0-\ftheta(x_t, t, c)}^2 + \beta(\beta-A(x_0,c))\,\norm{\ftheta(x_t, t, c)-\fold(x_t, t, c)}^2,
\]
corresponding to the \advantageflow{} objective \eqref{eq:advantageflow loss} with $\lambda=0$ and
rollout regularization schedule $\gammaNFT(A)=\beta(\beta-A(x_0, c))$.
\end{proposition}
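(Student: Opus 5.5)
The plan is to make the statement precise by writing the \diffusionnft{} branch loss in prediction space, and then to expand both it and the claimed \advantageflow{} expression around the common anchor $\fold$, comparing them term by term. Following \citet{diffusionnft}, \diffusionnft{} defines implicit positive and negative policies through a linear interpolation and extrapolation between the rollout and learned models with strength $\beta$. After replacing velocities by clean-image predictions, these become
\begin{align*}
  f^+ = (1 - \beta)\,\fold + \beta\,\ftheta\,, \qquad
  f^- = (1 + \beta)\,\fold - \beta\,\ftheta\,.
\end{align*}
Here all arguments $(x_t, t, c)$ are suppressed. The branch loss is the optimality-weighted sum $r\,\norm{f^+ - x_0}^2 + (1 - r)\,\norm{f^- - x_0}^2$. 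I will treat $\fold$ and $x_0$ as fixed, i.e.\ stop-gradient, so that every quantity not involving $\ftheta$ counts as a $\theta$-independent constant.

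The key step is the change of variables $d = \ftheta - \fold$ and $e = \fold - x_0$, which gives $f^+ - x_0 = e + \beta d$ and $f^- - x_0 = e - \beta d$. Expanding both squares, the weights $r$ and $1 - r$ sum to one on the $\norm{e}^2$ and $\beta^2 \norm{d}^2$ terms. The cross terms combine with coefficient $2\beta(2r - 1) = 2\beta A$. The \diffusionnft{} loss therefore equals
\begin{align*}
  \beta^2 \norm{d}^2 + 2\beta A \inner{d}{e} + \norm{e}^2\,,
\end{align*}
and the last term is dropped as a constant.

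Next I expand the claimed \advantageflow{} form in the same variables. Since $x_0 - \ftheta = -(d + e)$, we have $\beta A \norm{x_0 - \ftheta}^2 = \beta A (\norm{d}^2 + 2\inner{d}{e} + \norm{e}^2)$. Adding $\beta(\beta - A)\norm{d}^2$, the $\beta A \norm{d}^2$ terms cancel, leaving
\begin{align*}
  \beta^2 \norm{d}^2 + 2\beta A \inner{d}{e} + \beta A \norm{e}^2\,.
\end{align*}
This matches the \diffusionnft{} expansion up to $\beta A \norm{e}^2$. That term depends only on $x_0$, $\fold$ and the reward, so it is $\theta$-independent. Finally, I will read off that this has exactly the form of \eqref{eq:advantageflow loss} with $\lambda = 0$, advantage coefficient scaled by $\beta$, and rollout coefficient $\gammaNFT(A) = \beta(\beta - A)$. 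For $\beta = 1$ this is the adaptive choice $1 - A$ of \cref{sec:well-conditioned}.

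The algebra itself is routine. The main obstacle is fixing the correct definitions of $f^\pm$ as the prediction-space analogues of the \diffusionnft{} implicit policies, including the sign convention of the extrapolation. It also requires being explicit that ``$\theta$-independent'' relies on treating $\fold$ as a stop-gradient anchor, since the dropped constants $\norm{e}^2$ and $\beta A\norm{e}^2$ involve $\fold$. I would first check that, with these definitions, the gradients in $\theta$ of the two expressions agree identically, and not merely their minimizers.
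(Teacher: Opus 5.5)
Your proposal is correct and follows the paper's proof essentially step for step. It uses the same substitution $d$, $e$, the same expansion of the branch loss into $\norm{e}^2 + \beta^2\norm{d}^2 + 2\beta A\inner{e}{d}$, and the same observation that this differs from the claimed two-anchor form only by the $\theta$-independent term $(1-\beta A)\norm{e}^2$. The only difference is that the paper does the algebra in velocity space and then multiplies by $t^2$ via $x_0-\ftheta = t(\vtheta - v)$, whereas you define $f^\pm$ directly in prediction space. This is equivalent, because $f = x_t - t v$ is affine in $v$ and both interpolation weights sum to one, so your $f^\pm$ are exactly the images of the DiffusionNFT velocities $v^\pm$ and your branch loss is $t^2$ times theirs.
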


\begin{proof}
We work in velocity space, since that is how \diffusionnft{} states its objective.

Let $v$ be the forward-process velocity target, let $\vold$ be the rollout model, and let $\vtheta$ be the trainable model. \diffusionnft{} defines positive and negative branches
\begin{equation}
\label{eq:nft-branches}
  \vtheta^+ = (1-\beta)\vold + \beta\vtheta,
  \qquad
  \vtheta^- = (1+\beta)\vold - \beta\vtheta.
\end{equation}
Let $r\in[0,1]$ be the normalized optimality score and define $A(x_0, c)=2r(x_0, c)-1$. The branch loss is
\begin{equation}
\label{eq:nft-branch-loss}
  \ellNFT
  = r\,\norm{\vtheta^+-v}^2
    + (1-r)\,\norm{\vtheta^- - v}^2.
\end{equation}
Set $d=\vtheta-\vold$ and $e=\vold-v$. Then
\begin{equation}
\label{eq:nft-d-e}
  \vtheta^+ - v = e+\beta d,
  \qquad
  \vtheta^- - v = e-\beta d.
\end{equation}
Expanding,
\begin{equation}
\label{eq:nft-branch-expanded}
  \ellNFT
  = \norm{e}^2 + \beta^2\norm{d}^2 + 2\beta A(x_0, c)\inner{e}{d}.
\end{equation}
Now consider the two-anchor expression
\begin{equation}
\label{eq:nft-two-anchor-velocity}
  A(x_0, c)\,\norm{\vtheta-v}^2 + (\beta-A(x_0, c))\,\norm{\vtheta-\vold}^2.
\end{equation}
Since $\vtheta-v=e+d$, this equals
\begin{equation}
\label{eq:nft-two-anchor-expanded}
  A(x_0, c)\,\norm{e}^2 + \beta\,\norm{d}^2 + 2A(x_0, c)\inner{e}{d}.
\end{equation}
Multiplying by $\beta$ gives
\begin{equation}
\label{eq:nft-two-anchor-rescaled}
  \beta A(x_0, c)\,\norm{e}^2 + \beta^2\norm{d}^2 + 2\beta A(x_0, c)\inner{e}{d}.
\end{equation}
This differs from \cref{eq:nft-branch-expanded} only by $(1-\beta A(x_0, c))\norm{e}^2$, which is independent of $\vtheta$. Therefore, after dropping constants,
\begin{equation}
\label{eq:nft-equivalence}
  \ellNFT
  \equiv
  \beta A(x_0, c)\,\norm{\vtheta-v}^2 + \beta(\beta-A(x_0, c))\,\norm{\vtheta-\vold}^2.
\end{equation}
To pass from velocity space to prediction space, use
\[
  f_\theta(x_t,t,c)=x_t-t\vtheta(x_t,t,c),
  \qquad
  \fold(x_t,t,c)=x_t-t\vold(x_t,t,c),
\]
and let \(v=\epsilon-x_0\) be the forward-process velocity target. Then
\[
  x_0-\ftheta(x_t,t,c)
  =
  t(\vtheta-v),
  \qquad
  \ftheta(x_t,t,c)-\fold(x_t,t,c)
  =
  -t(\vtheta-\vold).
\]
Thus, after moving to the prediction-loss by multiplying the
velocity-space objective by \(t^2\), we have
\[
  t^2\norm{\vtheta-v}^2
  =
  \norm{x_0-\ftheta(x_t,t,c)}^2,
  \qquad
  t^2\norm{\vtheta-\vold}^2
  =
  \norm{\ftheta(x_t,t,c)-\fold(x_t,t,c)}^2 .
\]
Substituting these identities into the velocity-space expression gives
\cref{eq:diffusionnft loss}. Thus \diffusionnft{} corresponds to the advantage-dependent schedule
\begin{equation}
\label{eq:nft-gamma-schedule-appendix}
  \gammaNFT(A) = \beta(\beta - A(x_0, c)).
\end{equation}
For $\beta=1$, this becomes $\gammaNFT(A)=1-A(x_0, c)$.
\end{proof}

\end{document}